\newtheorem{definition}{Definition}[section]
\renewcommand{\d}[1]{\ensuremath{\operatorname{d}\!{#1}}}
\newtheorem{lemma}{Lemma}[section]
\newtheorem{corollary}{Corollary}[section]
\newtheorem{theorem}{Theorem}[section]
\def\thanks#1{\protected@xdef\@thanks{\@thanks
\protect\footnotetext{#1}}}
\definecolor{mygreen}{RGB}{28,172,0} % color values Red, Green, Blue
\definecolor{mylilas}{RGB}{170,55,241}
\DeclareMathOperator*{\argmin}{arg\,min}
\title{Mean-based Best Arm Identification in Stochastic Bandits under Reward Contamination
}
\author{%
	Arpan Mukherjee
	%\thanks{This work was supported by the Rensselaer-IBM AI Research Collaboration (http://airc.rpi.edu), part of the IBM AI Horizons Network (http://ibm.biz/AIHorizons).}
	\\
	%Department of Electrical, Computer and Systems Engineering\\
	Rensselaer Polytechnic Institute\\
	%Troy, NY 12180 \\
	\texttt{mukhea5@rpi.edu} \\
	\And
	Ali Tajer\\
	%Department of Electrical, Computer and Systems Engineering\\
	Rensselaer Polytechnic Institute\\
	%Troy, NY 12180 \\
	\texttt{tajera@rpi.edu} \\
	\AND
	Pin-Yu Chen\\
	IBM  Research \\
	%Ossining, NY 10562 \\
	\texttt{Pin-Yu.Chen@ibm.com} \\
	\And
    Payel Das\\
		IBM  Research\\
	%Ossining, NY 10562 \\
	\texttt{daspa@us.ibm.com} \\	
}
\begin{document}
	\maketitle
	
	\begin{abstract}
		This paper investigates the problem of best arm identification in {\sl contaminated} stochastic multi-arm bandits. In this setting, the rewards obtained from any arm are replaced by samples from an adversarial model with probability $\varepsilon$. A fixed confidence (infinite-horizon) setting is considered, where the goal of the learner is to identify the arm with the largest mean. Owing to the adversarial contamination of the rewards, each arm's mean is only partially identifiable. This paper proposes two algorithms, a gap-based algorithm and one based on the successive elimination, for best arm identification in sub-Gaussian bandits. These algorithms involve mean estimates that achieve the optimal error guarantee on the deviation of the true mean from the estimate asymptotically. Furthermore, these algorithms asymptotically achieve the optimal sample complexity. Specifically, for the gap-based algorithm, the sample complexity is asymptotically optimal up to constant factors, while for the successive elimination-based algorithm, it is optimal up to logarithmic factors. Finally, numerical experiments are provided to illustrate the gains of the algorithms compared to the existing baselines. 
		%in the literature. 
	\end{abstract}
	\section{Introduction}\label{section: introduction}
	\textbf{Overview.} This paper investigates the problem of {\sl best arm identification} (BAI) in stochastic multi-armed bandits (MABs), under the key assumption that the reward samples are vulnerable to adversarial corruptions. Specifically, consider a $K$-armed stochastic MAB, where the reward from arm $i\in[K]\triangleq\{1,\cdots,K\}$ is drawn from a sub-Gaussian distribution $\P_i$. At each round $t$, the observed reward may be corrupted with probability $\varepsilon$. When a reward sample is corrupted at time $t$, it is replaced by a sample drawn from a contamination model with distribution $\mathbb{Q}_i(t)$, which is distinct from $\P_i$. Such a setup is a non-trivial generalization of the canonical BAI problem due to the arbitrary corruption that may be introduced by the corruption model. This setup has far superior practical implications. For instance, consider the example of measuring drug responses, in which several compounds are evaluated in order to determine which one of them renders the maximal efficacy. It is natural to assume that a fraction of the test results are reported incorrectly, or a fraction of the samples may be contaminated~\cite{clinical}. Both possibilities affect the measurements (or rewards) that are used to identify the most effective drug. In a different example, consider a recommendation system in which the goal is to recommend the most ``interesting'' articles to users based on their feedback on previous recommendations. In this case, likewise, user feedbacks are prone to be imprecise or even malicious in a fraction of the recommendations. Such examples motivate investigating the BAI problem under the additional consideration that there exists the possibility that a fraction of the reward samples are corrupted and they obscure the process of identifying the true best arm. The problem of contaminated best arm identification (CBAI) has been studied under two main settings, namely the fixed-budget setting and the fixed-confidence setting. The objective in the fixed-budget setting is to identify the best arm within a given sampling budget such that the misclassification rate is minimized~\cite{Bubeck}. On the other hand, that of the fixed-confidence setting is to obtain a prescribed confidence with the fewest samples possible, on average~\cite{paulson}. In this paper, we investigate CBAI in the second setting, in which we are prescribed a fixed guarantee on the misclassification probability and the goal is to identify the arm with the highest true mean, while, in parallel, minimizing the sample complexity. The main contributions of this paper are the following.
	\begin{itemize}[leftmargin=4mm]
		\item This paper is the first to analyze the CBAI problem for sub-Gaussian bandits in the fixed confidence setting, without altering the canonical definition of the best arm which is defined as the arm with the highest mean reward. The existing literature on CBAI aims to identify arms with the largest median~\cite{CBAI}, which is distinct from the definition of the best arm in the canonical BAI problem.
		\item We provide two algorithms to address the CBAI problem, one of them being a procedure based on successive elimination, and the other being a confidence gap-based procedure. Both algorithms use the trimmed mean as an estimator. We provide closed-form decision rules for dynamically selecting the arms over time, estimating the means corresponding to every arm of the bandit instance, stopping the arm-selection process, and finally identifying the true best arm.
		\item We analyze the attendant performance guarantees of the proposed algorithms. Specifically, we establish the decision reliability and the sample complexity. While the algorithm based on successive elimination is shown to be optimal up to logarithmic factors, the one based on confidence gap is shown to be asymptotically optimal up to constant factors. Both algorithms can identify the best arm even when a fraction of the reward samples are corrupted.
		\item Finally, we conduct numerical simulations to demonstrate the efficacy of the proposed algorithms. The experiments show that both algorithms outperform the existing methods for CBAI in both synthetic as well as real-world datasets (content recommendation and drug testing). 
	\end{itemize}
	\textbf{Related Literature.} We review the existing literature in two parts -- first, the literature on BAI in the fixed-confidence setting, and next, the literature on contaminated stochastic MABs. The problem of BAI for stochastic MABs has a rich literature, dating back to the study in ~\cite{LaiRobins}. Specifically, the existing BAI algorithms can be broadly grouped into two categories: (i) algorithms that are guided by confidence bounds, and (ii) those that involve successively distilling the search space. The former class of algorithms involve constructing confidence sets that capture the deviation of the empirical mean of an arm from its true mean. These algorithms terminate and recommend a predicted best arm once the accumulated data is sufficient for forming a decision that meets a prescribed confidence level. Some of the representative studies in this class include \cite{Kalyanakrishnan2012}, \cite{Gabillon}, \cite{Kalyanakrishnan2013}, and \cite{Jamieson2014}. The other category of algorithms, also known as racing algorithms, involve successively identifying and rejecting the suboptimal arms until only one arm is left in the active arms set. One of the first of these algorithms was proposed in \cite{Maron}. Other examples of racing algorithms for stochastic MABs include the studies in \cite{Kalyanakrishnan2013}, \cite{Even-Dar}, \cite{Audibert}, and \cite{Chen2017b}. 
	
	The literature on contaminated stochastic MABs includes analyzing adversarial corruptions in stochastic MABs for regret minimization, first studied in ~\cite{Lykouris}. In this study,
	%The goal of this study is to design algorithms that have two key properties: (i) they work well for both adversarial as well as non-adversarial bandits (ii) the performance of the algorithms degrade gracefully as we move from the non-adversarial bandit setting to the adversarial bandit setting. 
	the adversarial power is characterized by the total number of corrupted samples $C$ that can be introduced by the adversary until the specified time horizon. The algorithm proposed by this study degrades linearly with $C$, which is the optimal rate for any algorithm that achieves the optimal performance in the stochastic setting. The regret bound obtained in this study is further improved in~\cite{gupta19a}. More recent studies in this direction include~\cite{zimmert19a,regretadv1,regretadv2,niss20a,rangi2021secureucb}. The problem, however, is less-investigated in the context of BAI. In the fixed-budget setting, the problem of CBAI is studied in \cite{zhong2021probabilistic}. This study proposes a probabilistic sequential algorithm, which allows for trading off the suboptimality gap (in the fully stochastic regime) with the success probability using a tunable parameter. In the fixed-confidence setting, CBAI is studied in~\cite{CBAI}, which modifies the definition of the ``best arm'' by redefining it to be the arm that has the largest {\sl median}. A general adversarial model is assumed in this study, where any reward sample can be contaminated by the adversary with probability $\varepsilon$. Furthermore, three different attack models are considered, namely, the oblivious adversary, the prescient adversary, and the malicious adversary, depending on the level of adaptivity that the adversary employs to obfuscate the true reward samples. While the investigation proposes instance-adaptive algorithms matching up to logarithmic factors, in many models, the median and the mean could be considerably different, especially for the case of heavy-tailed distributions. Furthermore, the study considers a considerably restrictive class of statistical models (Definition 2,~\cite{CBAI}), which does not include several popular and widely-analyzed classes of bandit instances (e.g., heavy-tailed continuous models and all discrete models such as the class of Bernoulli bandits).
	In this paper, we investigate these questions under the conventional definition of the best arm, where the best arm is defined as the arm having the largest {\sl mean}. Furthermore, we investigate the entire class of sub-Gaussian bandits.

	\section{Contaminated best arm identification}\label{Sec: Formulation}
	
	{\bf Setting.} Consider the canonical bandit model $\nu\in {\sf SG}_K(\sigma)$, where ${\sf SG}_K(\sigma)$ denotes the class of $K$-armed $\sigma$-sub-Gaussian bandits. The reward of arm $i\in[K]$ is generated from a probability measure $\P_i$ with mean $\mu_i\in\R$. The means $\{\mu_i : i\in[K]\}$ are assumed to be unknown. At each time $t$, a learner interacts with the bandit instance by pulling one of the arms $A_t\in[K]$ according to a control policy, generating the random reward $X_{A_t,t}\in\R$. Based on the observed reward, a decision is made about the next arm to be pulled. Additionally, the setting assumes an adversary that is capable of contaminating the reward before it is observed by the learner. Specifically, it is assumed that at each time $t$, the adversary flips a coin, whose outcome is denoted by the random variable $D_t$, where $D_t\sim{\sf Bern}(\varepsilon)$. Depending on the outcome of the coin toss, the adversary sends the true reward sample, or an adversarial sample drawn from a corruption model. Thus, with a fixed probability $\varepsilon\in (0,1)$, the adversary replaces the true reward with a corrupt sample drawn from an adversarial distribution $\mathbb{Q}_i(t)$ distinct from $\P_i$. We consider an {\sl oblivious} adversary which is responsible for the contamination of the reward samples. Let us define $X_{i,t}$ as the reward generated by arm $i\in[K]$ at time $t$ according to the true distribution $\P_i$. The adversarial samples for each arm $i\in[K]$ at each time instant $t$ is denoted by $X^\prime_{i,t}$. An oblivious adversary is defined as follows.
	\begin{definition}[Oblivious adversary]
		An adversary is said to be oblivious if for all $i\in[K]$, the sequence of triples $(X_{i,t},X^\prime_{i,t},D_t)_{t\geq 1}$ are assumed to be independent.
	\end{definition} 
	Hence, at each time $t$, the adversarial action can be characterized by a Bernoulli random variable $D_t\sim \sf{Bern}(\varepsilon)$, based on which the observed reward takes the form 
	\begin{align}\label{eq:reward}
	R_{A_t}\;=\; \left\{
	\begin{array}{ll}
	X_{A_t,t}\sim \P_{A_t}, & \text{if }D_t=0 
	\\
	X^\prime_{A_t,t}\sim\mathbb{Q}_{A_t}(t), & \text{if }D_t=1 
	\end{array}\right. \ .
	\end{align}
	This contamination model was first proposed in ~\cite{Huber} in the canonical estimation framework under a single control action. Thus, the contaminated model corresponding to each arm of the bandit instance is characterized by the mixture distribution
	\begin{align}
	\label{eq:Huber}
		\tilde\P_i(t)\;=\; (1-\varepsilon)\P_i + \varepsilon \mathbb{Q}_i(t) \ .
	\end{align}
	The learner's sequence of actions and rewards are denoted by
	\begin{align}
		\bA^t\triangleq [A_1,\cdots,A_t]\qquad\text{and}\qquad \bR^t\triangleq [R_{A_1},\cdots,R_{A_t}] \ . 
	\end{align}
	 
	\textbf{Partially Identifiable Best Arm Identification (PIBAI).} Based on the sequence of rewards accumulated over time, the goal of the learner is to identify the best arm with a high probability. Let us denote the index of the best and second-best arms of the bandit instance by $a^\star$ and $b^\star$, respectively. Due to the action of the adversary, it is impossible to identify the true mean of any arm $i\in[K]$, even with an infinite number of samples from that arm~\cite{CBAI}. 
	%To see why this is true, let us consider a single arm whose true distribution is denoted by $\P$ and the corresponding adversarial model is denoted by $\mathbb{Q}$. Furthermore, let us consider the set of distributions $\mcS$, $\P^\prime\in\mcS$, such that we have $(1-\varepsilon)\P + \varepsilon\mathbb{Q}=(1-\varepsilon)\P^\prime + \varepsilon\mathbb{Q}^\prime$ for any adversarial distribution $\mathbb{Q}^\prime$. It is easy to see that the set $\mcS$ may have an infinite number of elements ~\cite{CBAI}. Thus, observing rewards from the mixture distribution, it is impossible to discern which of the candidate distributions $\P^\prime\in\mcS$ generated it. 
	Hence, we consider the notion of {\sl partially identifiable best arm identification} (PIBAI). Let us consider arm $i\in[K]$ of the bandit instance $\nu$. Under the PIBAI setting, we can only estimate the mean of an arm up to a constant uncertainty interval $U_i$ around the true mean, i.e., the interval $\mcI_i\;\triangleq\;[\mu_i-U_i,\mu_i+U_i]$. Subsequently, in the context of PIBAI, the goal of identifying the best arm can be presented using the following guarantee on the identification performance.
	\begin{definition}[$\delta$-PAC]
		In the context of PIBAI, for a given set of uncertainties $\{U_i : i\in[K]\}$, we say that a procedure is $\delta$-PAC if it satisfies the following guarantee.
		\begin{align}
		\P\Big(\mu_{\hat a_{\tau}}+U_{\hat a_{\tau}}<\mu_{a^\star}-U_{a^\star}\Big)\;\leq\;\delta\ ,
		\end{align} 
		where $\tau$ denotes the stopping time of the procedure.
	\end{definition}
	We note that CBAI is a special case of the PIBAI framework. Specifically, for a fixed level of contamination $\varepsilon$, the bias in estimation is the same for each arm $i\in[K]$, that is, $U_i = U$ for every $i\in[K]$. Here, $U$ is a function of the level of contamination $\varepsilon$ and the variance $\sigma$ of the arms \cite{diakonikolas2019recent}.
	\\
	\textbf{Key assumptions.} Now, we formalize the key technical assumptions that are necessary for the algorithms proposed in Section \ref{sec:algorithms}. The first assumption ensures the feasibility of being able to identify the true best arm. Essentially, there always exists adversarial models that could render the goal of CBAI infeasible, even if we have an infinite number of samples. We assume that the algorithms operate in a regime in which it is possible to identify the true best arm. The next assumption is a standard assumption in the robust statistics literature, and it provides an upper bound on the maximum level of corruption that the adversary can inflict up on the rewards. The assumptions are formally stated below.
	\begin{itemize}
		\item[(i)] The index of the best arm does not change as a result of contamination, i.e., $(\mu_{a^\star}-U_{a^\star})>(\mu_a + U_a)$
		for every $a\in[K]\setminus a^\star$.
		\item[(ii)] We do not require to know the precise value of the level of contamination $\varepsilon$. Rather, it is sufficient to assume that we know an upper bound on it, such that at each time $t$, the fraction of contaminated samples falls below the upper bound. Henceforth, for the rest of the paper, $\varepsilon$ denotes an upper bound on the probability of adversarial replacement of rewards. We focus on the regime $\varepsilon<1/2$.
	\end{itemize}
	
	\section{CBAI algorithms}
	\label{sec:algorithms}
	
	We provide two CBAI algorithms in this section, and the attendant performance guarantees will be analyzed in Section~\ref{Results}. Specifically, we propose two instance adaptive algorithms, one based on the principle of successive elimination of suboptimal arms, inspired by ~\cite{JMLR:v7:evendar06a}, and the other one is a gap-based algorithm, which aims to reduce the overlap among the confidence intervals for different arms. One common method shared by both the algorithms is the estimator for the largest mean. We discuss this shared procedure before discussing the two algorithms. 
	\begin{algorithm}[htb!]
		\setstretch{0.85}
		\caption{Gap-based algorithm for CBAI (G-CBAI)}
		\label{algorithm:1}
		\begin{algorithmic}[1]
			\State \textbf{Input:} set of arms $[K]$, guarantee $\delta$
			\State \textbf{Set:} $t\leftarrow 1$, $B_t\leftarrow\infty$
			\While{$B_t>0$ or $t\leq KT(\alpha,\delta)$}
			\If{$\exists a\in[K]\;\text{s.t.}\;N_a(t)<\max\{\sqrt{t},T(\alpha,\delta)\}$}
			\State Sample arm $A_{t+1}=\displaystyle\arg\min_{i\in\mcU_t} N_i(t)$ and update $\hat\mu_{A_t}(t)$
			\State Update confidence interval $\beta_{A_{t+1}}(t+1,\delta)$
			\Else
			\State $j_t\leftarrow \arg\max_{a\in[K]\setminus\hat a_t}\; \hat\mu_a(t) + \beta_a(t,\delta) - (\hat\mu_{\hat a_t}(t) - \beta_{\hat a_t}(t,\delta))$
			\State $B_t\leftarrow \max_{a\in[K]\setminus\hat a_t} \;\hat\mu_a(t) + \beta_a(t,\delta) - (\hat\mu_{\hat a_t}(t) - \beta_{\hat a_t}(t,\delta))$
			\State $A_{t+1} \leftarrow \arg\max_{\{\hat a_t,j_t\}}\;\{\beta_{\hat a_t}(t,\delta),\beta_{j_t}(t,\delta)\}$
			\State Pull arm $A_{t+1}$ and update means $\hat\mu_{A_{t+1}}$ and confidence intervals $\beta_{A_{t+1}}(t+1,\delta)$
			\EndIf
			\State $t\leftarrow t+1$ 
			\EndWhile
			\State Output: $\hat a = \arg\max_{a\in[K]}\hat\mu_a(t)$
		\end{algorithmic}
	\end{algorithm}
	\\
	\noindent\hspace{-0.05in}\textbf{Estimator.} While the sample mean is widely used as an estimator for BAI algorithms, it is not robust to adversarial corruptions. It is well-known that the sample median is a more robust estimator under such circumstances. However, while the sample median is a reasonable choice for unimodal distributions or under the modified definition of the best arm, it may not be as reliable otherwise since the median and the mean could be considerably different (especially, for heavy-tailed distributions). For this purpose, our choice of the estimator strikes a balance between the sample median (to remove the outlier samples) and the sample mean (to form an estimate of the true mean). Specifically, for both the algorithms, we use the $\alpha$-trimmed mean as an estimator. This implies that given a sequence of samples $\bR_i^t\triangleq \{R_{A_s} : s=(1,\cdots,t), A_s=i \}$ for any arm $i\in[K]$, we construct a subsequence of samples $\bY_i^t\triangleq\{Y_{i,j}^t\}_j$ by trimming the first and last $\alpha$-quantile of observations from $\bR^t_i$. Correspondingly, the $\alpha$-trimmed mean estimator is defined as the sample mean of the remaining samples, i.e.,
	\begin{align}
	\label{eq:estimator}
		\hat\mu_i(t)\;\triangleq\; \frac{1}{(1-2\alpha)N_i(t)}\sum\limits_{j=1}^{{\sf dim}(\bY_i^t)} Y_{i,j}^t \ ,
	\end{align}
	where $\{Y_{i,j}^t\}_{j=1}^{{\sf dim}(\bY_i^t)}$ represent the entries of $\bY_i^t$, and $N_i(t)$ counts the number of times that arm $i\in[K]$ is pulled until $t$, i.e., $N_i(t)\;\triangleq\;\sum\limits_{s=1}^t R_{A_s}\mathds{1}_{\{A_s=i\}}\ .$
	
	%\begin{align}
	%	N_i(t)\;\triangleq\;\sum\limits_{s=1}^t R_{A_s}\mathds{1}_{\{A_s=i\}}\ .
	%\end{align}
	\textbf{Gap-based algorithm for CBAI (G-CBAI).} This algorithm aims at reducing the maximum overlap between the confidence intervals of the best arm and the {\sl most ambiguous} arm, which is defined as the arm whose confidence interval has the maximum overlap with the confidence interval of the current best arm. Similarly to the algorithm for BAI for stochastic linear MABs in ~\cite{LinGapE}, at each time-step, this algorithm samples the arm that maximally reduces the overlap between the current best arm and the most ambiguous arm. The sampling process stops as soon as this overlap vanishes. The design rules are formalized next.
	
	\textit{Arm selection rule.} There is a rich body of literature on arm selection strategies for stochastic MABs~\cite{LinGapE,Gabillon,pmlr-v49-garivier16a}. %, ranging from confidence interval based strategies~\cite{LinGapE,UGap,pmlr-v49-garivier16a} to successive elimination or racing based strategies~\cite{Gabillon,Even-Dar,RAGE,Soare}. 
	The problem was solved for the family of exponential bandits in~\cite{pmlr-v49-garivier16a}, through a ``track and stop'' arm selection rule that tracks the optimal allocation of arm selections. The procedure was shown to exhibit asymptotic optimality (up to a constant factor). However, in the case of CBAI, this strategy is directly not applicable since we do not assume that the adversarial models belong to the exponential family. 
	%Consequently, the mixture models generating our observations do not belong to the exponential family, meaning that the results obtained in ~\cite{pmlr-v49-garivier16a} might not necessarily hold. 
	Hence, we propose an approach based on confidence intervals that consists of two phases: a forced exploration phase, which ensures that each arm is pulled sufficiently often such that we do not get stuck in an incorrect maximum likelihood (ML) decision of the best arm, and an exploitation phase that pulls the current best and most ambiguous arms in order to reduce the overlap in the confidence intervals between the two. At time $t$, we denote the decision about the current best arm by $\hat a_t$ and the most ambiguous arm by $j_t$, i.e., 
	\begin{align}
		j_t\;\triangleq\;\argmin\limits_{a\in[K]\setminus\hat a_t}\; \hat\mu_a(t) + \beta_a(t,\delta) - (\hat\mu_{\hat a_t}(t) - \beta_{\hat a_t}(t,\delta)) \ ,
	\end{align}
	where $\beta_i(t,\delta)$ represents the width of the confidence interval for each arm $i\in[K]$ . The choice of $\beta_i(t,\delta)$ is guided by the performance guarantee that the algorithm needs to ensure, and it is characterized in Theorem ~\ref{theorem:PAC_gap}. The overlap in confidence interval is denoted by $B_t$, defined as
	\begin{align}
	\label{eq:overlap}
		B_t\;\triangleq\;\hat\mu_{j_t}(t) + \beta_{j_t}(t,\delta) - (\hat\mu_{\hat a_t}(t) - \beta_{\hat a_t}(t,\delta)) \ .
	\end{align}
	Next, we define a constant $T(\alpha,\delta)$ as follows, which is instrumental in formalizing our arm selection strategy.
	\begin{align}
		T(\alpha,\delta)\;\triangleq\;\frac{2}{\alpha^2}\log\frac{1}{\delta}\ .
	\end{align}
	Based on these definitions, we provide the arm selection strategy for the gap-based CBAI algorithm: at any time $t$, let $\mcU_t\subseteq[K]$ be a subset of arms defined as $\mcU_t\;\triangleq\;\{i\in[K]\; :\; N_i(t)\leq\max(\sqrt{t},T(\alpha,\delta))\}$.
	%\begin{align}
	%	\mcU_t\;\triangleq\;\{i\in[K]\; :\; %N_i(t)\leq\max(\sqrt{t},T(\alpha,\delta))\}\ .
	%\end{align}
	At time $t$, if $\mcU_t\neq\emptyset$, the sampling strategy pulls the arm $i\in\mcU_t$ that has been pulled the least number of times. Otherwise, the algorithm pulls the arm between the best and the most ambiguous arms, whichever has the maximum confidence interval. Formally, at time $(t+1)$ the sampling rule draws the arm $A_{t+1}$ such that
	\begin{align}\label{eq:sampling rule}
	A_{t+1} \triangleq \left\{
	\begin{array}{ll}
	\displaystyle\arg\min_{i\in\mcU_t}\; N_i(t) & \mbox{if} \;\; \mcU_t\neq\emptyset\quad\mbox{(forced exploration)}\\
	\displaystyle \arg\max_{\{\hat a_t,j_t\}}\;\{\beta_{\hat a_t}(t,\delta),\beta_{j_t}(t,\delta)\}  & \mbox{if} \;\; \mcU_t = \emptyset\quad\mbox{(exploitation)}\\
	\end{array}\right. \ .
	\end{align}
	\textit{Stopping rule.} The stopping rule is guided by the maximum overlap of the confidence intervals corresponding to the best arm and the most ambiguous arm. As soon as the two confidence intervals cease to overlap, the algorithm stops to form a decision. However, due to the concentration of the $\alpha$-trimmed mean estimator, there is a minimum number of samples that needs to be acquired before arriving at the decision. This ensures that we have enough samples to separate the uncontaminated reward samples from the outliers with a high probability. Taking into account these two aspects, the algorithm stops as soon as the overlap $B_t$ falls below $0$, but not before each arm is pulled $T(\alpha,\delta)$ times. Formally, this is described in the following stopping rule. 
	\begin{align}
	\label{eq:stopping rule}
		\tau\;\triangleq\;\max\Bigg(\frac{2K}{\alpha^2}\log\frac{1}{\delta}\;,\;\inf\;\Big\{t\in\N\; :\; B_t\leq 0\Big\}\Bigg)\ .
	\end{align} 
	Based on the rules specified in (\ref{eq:estimator})-(\ref{eq:stopping rule}), the detailed steps of the procedure are described in Algorithm~\ref{algorithm:1}.

    \noindent\textbf{Successive elimination-based algorithm (SE-CBAI).} In this algorithm, we maintain a set of {\sl active} arms $\mcM_t$, which contains the arms which are not eliminated yet. At each round, we attempt to eliminate any possible suboptimal arm from the set of active arms $\mcM_t$, until we have only one element left in the set. The surviving arm is declared as the best arm. This algorithm has two phases: an exploration phase, in which each arm is explored $T(\alpha,\delta)$ times, and an exploitation phase based on the the successive elimination of suboptimal arms. This algorithm is similar to Algorithm \ref{algorithm:2} in~\cite{CBAI}, with the key difference that we use the trimmed mean estimator instead of using the sample median as the estimator. The detailed steps of the procedure are described in Algorithm \ref{algorithm:2}.
	\begin{algorithm}[t]
		\setstretch{0.55}
		\caption{Successive elimination-based algorithm for CBAI (SE-CBAI)}
		\label{algorithm:2}
		\begin{algorithmic}[1]
			\State \textbf{Set:} $t=1$, $\mcM_t=[K]$
			\While{$\lvert \mcM_t\rvert$>1}
			\State Sample arm $i\in\mcM_t$ once and produce $\hat\mu_i(t)$ (from all past samples)
			\State $\mcM_{t+1}\leftarrow \Big\{i\in\mcM_t : \hat\mu_i(t)\geq\max_{j\in\mcS_t}\hat\mu_j(t) - 2\gamma_t\;\text{or}\;N_i(t)<T(\alpha,\delta)\Big\}$
			\State $t\leftarrow t+1$ 
			\EndWhile
			\State Output: $\mcM_t$
		\end{algorithmic}
	\end{algorithm}
	\section{Performance guarantees}\label{Results}
	In this section, we provide the performance guarantees for the G-CBAI and SE-CBAI algorithms. For this purpose, let us first define the problem complexity of any CBAI instance as
	\begin{align}
		H\;\triangleq\; \sum\limits_{i\in[K]}\Bigg(\frac{\sqrt{2}\sigma}{\max\{\Delta_i,\Delta_{b^\star}\}}\Bigg )^2 \ ,
	\end{align}
	where we have defined $\Delta_i\;\triangleq\; (\mu_{a^\star}-U_{a^\star})-(\mu_i+U_i)$
	%\begin{align}
	%	\Delta_i\;\triangleq\; (\mu_{a^\star}-U_{a^\star})-(\mu_i+U_i)
	%\end{align}
	for every arm $i\in[K]$. The notion of problem complexity for CBAI is a generalization of the problem complexity defined for stochastic MABs, which is subsumed by setting the uncertainty terms $U_i=0$ for every suboptimality gap $\Delta_i$ for each arm $i\in[K]$. We will observe that for the case of CBAI, the uncertainty terms $\{U_i: i\in[K]\}$ depend on the probability of attack $\varepsilon$ such that an increase in $\varepsilon$ results in a higher uncertainties, which in turn weakens the performance guarantee. First, we provide an asymptotic lower bound on the average sample complexity of any algorithm that is $\delta$-PAC in the PIBAI setting. 
	\begin{theorem}[Lower bound]
		\label{theorem:LB}
		In the PIBAI setting, the average sample complexity of any $\delta$-PAC procedure is asymptotically lower bounded as
		\begin{align}
			\lim\limits_{\delta\rightarrow 0}\;\frac{\E[\tau]}{\log(1/\delta)}\;\geq\;H\ .
		\end{align}
	\end{theorem}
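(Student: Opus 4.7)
The plan is to adapt the change-of-measure / transportation argument of Kaufmann--Capp\'e--Garivier to the PIBAI framework. For any $\delta$-PAC procedure with stopping time $\tau$ and any alternative instance $\nu'\in\mathrm{Alt}(\nu)$ (an instance on which no $\delta$-PAC-valid arm of $\nu$ remains $\delta$-PAC-valid), the log-likelihood-ratio inequality yields
\begin{align*}
\sum_{i=1}^K \E_\nu[N_i(\tau)]\cdot \mathrm{KL}(\nu_i, \nu'_i) \;\geq\; \mathrm{kl}(\delta, 1-\delta) \;=\; (1+o(1))\log(1/\delta) \ ,
\end{align*}
where $\mathrm{kl}(\cdot,\cdot)$ denotes the binary relative entropy.

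Next, for each arm $i\in[K]$ I would construct a single-coordinate alternative $\nu^{(i)}$ that differs from $\nu$ only in the marginal of arm $i$ and lies in $\mathrm{Alt}(\nu)$. For $i\neq a^\star$, the mean of arm $i$ is shifted upward until arm $i$ becomes the unique PAC-valid arm in $\nu^{(i)}$; under Assumption~(i), the minimal such shift is governed, up to infinitesimal slack, by $\Delta_i$. For $i=a^\star$, $\mu_{a^\star}$ is shifted downward until $b^\star$ is promoted to sole PAC-valid status, which requires a shift governed by $\Delta_{b^\star}$. Writing the resulting shift uniformly as $\max\{\Delta_i,\Delta_{b^\star}\}$ (which equals $\Delta_i$ when $i\neq a^\star$, once $b^\star$ is identified as the arm of smallest positive $\Delta$, and equals $\Delta_{b^\star}$ when $i=a^\star$) consolidates the two cases into a single expression.

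Since each $\nu_i$ is $\sigma$-sub-Gaussian, taking a Gaussian realization of the alternative at the prescribed mean shift yields $\mathrm{KL}(\nu_i,\nu^{(i)}_i)\leq (\max\{\Delta_i,\Delta_{b^\star}\})^2/(2\sigma^2)$ in the limit of vanishing slack. Because $\nu^{(i)}$ agrees with $\nu$ on every coordinate $j\neq i$, the transportation sum collapses to its $i$-th term, yielding
\begin{align*}
\E_\nu[N_i(\tau)] \;\geq\; \frac{2\sigma^2}{(\max\{\Delta_i,\Delta_{b^\star}\})^2}\log(1/\delta)(1+o(1)) \ .
\end{align*}
Summing over $i\in[K]$ and applying $\E_\nu[\tau]=\sum_i\E_\nu[N_i(\tau)]$ produces the asserted asymptotic bound $\E_\nu[\tau]/\log(1/\delta)\geq H(1+o(1))$, and letting $\delta\to 0$ completes the argument.

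The main obstacle will be the careful specification of $\mathrm{Alt}(\nu)$ under the PIBAI PAC definition: one must verify that each single-coordinate perturbation proposed above does flip the $\delta$-PAC-valid answer on $\nu^{(i)}$ while leaving the remaining marginals intact, so that the transportation sum collapses with KL no larger than $(\max\{\Delta_i,\Delta_{b^\star}\})^2/(2\sigma^2)$. A subsidiary subtlety, relevant when PIBAI is instantiated as CBAI, is that the observations come from the contaminated mixtures $\tilde\P_i(t)=(1-\varepsilon)\P_i+\varepsilon\mathbb{Q}_i(t)$; the change of measure must then be performed at the level of these observed marginals, and one must argue that the Gaussian-type KL bound carries through the mixture structure so that the effective shift $\max\{\Delta_i,\Delta_{b^\star}\}$ still drives the denominator of $H$.
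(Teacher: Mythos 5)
Your overall strategy -- a Garivier--Kaufmann change-of-measure with single-coordinate alternatives, followed by summing the per-arm bounds $\E_\nu[N_i(\tau)]\geq 2\sigma^2\log(1/\delta)/(\max\{\Delta_i,\Delta_{b^\star}\})^2$ -- is in the same family as the paper's argument, and in the clean BAI setting it would indeed reproduce exactly the quantity the paper extracts as its lower bound on $T^\star(\lambda)$. The difference is that the paper does not perturb the PIBAI instance directly: it first constructs an auxiliary \emph{uncontaminated} BAI instance $\tilde\nu$ whose arm means sit at the extreme points of the uncertainty intervals, $\lambda_{a^\star}=\mu_{a^\star}-U_{a^\star}$ and $\lambda_i=\mu_i+U_i$ for $i\neq a^\star$ (realizable as mixtures $(1-\varepsilon)\P_i+\varepsilon\mathbb{Q}_i$), argues via the reduction of \cite{CBAI} that any $\delta$-PAC PIBAI procedure is $\delta$-PAC for $\tilde\nu$ in the ordinary sense, and only then applies the change-of-measure machinery to $\tilde\nu$, where the gaps are exactly $\lambda_{a^\star}-\lambda_i=\Delta_i$.

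This is not a cosmetic difference: it is precisely the step your plan is missing, and the obstacle you flag at the end is a real one that your construction as written does not survive. If you shift the clean mean of arm $i$ upward by $\Delta_i+o(1)$ while retaining the uncertainty intervals, the original best arm $a^\star$ remains a PAC-valid output on the alternative: under Definition 2.2 the answer only flips once $\lambda_i-U_i>\mu_{a^\star}+U_{a^\star}$, i.e., once the shift exceeds $\Delta_i+2U_{a^\star}+2U_i$. The transportation inequality then produces denominators of the form $(\max\{\Delta_i,\Delta_{b^\star}\}+2U_{a^\star}+2U_i)^2$, which is strictly weaker than $H$ and does not yield the stated theorem (and in the CBAI instantiation the discrepancy is of order $\sigma\varepsilon\sqrt{\log(1/\varepsilon)}$, i.e., exactly the quantity the result is trying to track). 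The resolution is to exploit the adversary's freedom over the \emph{observed} marginals: let the contamination push the observed mean of $a^\star$ down to $\mu_{a^\star}-U_{a^\star}$ and that of arm $i$ up to $\mu_i+U_i$, so that the observed gap is only $\Delta_i$ and a KL cost of $\Delta_i^2/(2\sigma^2)$ suffices to swap the identities of the arms. That is what the paper's auxiliary instance $\tilde\nu$ accomplishes, together with the (nontrivial, and separately cited) verification that PAC-correctness for PIBAI transfers to PAC-correctness for $\tilde\nu$ and its alternatives. To complete your proof you would need to add this reduction; the per-arm transportation calculation that follows it is then essentially the one you wrote.
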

	\begin{proof}
		See Appendix~\ref{app:LB}.
	\end{proof}
	In the case of CBAI, the uncertainty term $U_i$ depends on the level of contamination $\varepsilon$ and the variance $\sigma$ of the probability density functions. Specifically, the uncertainty involved in estimating each of the arms $U_i=\Omega\Bigg (\sigma\varepsilon\sqrt{\log\frac{1}{\varepsilon}}\Bigg )$ for every $i\in[K]$ \cite{diakonikolas2019recent}. Using this along with Theorem \ref{theorem:LB}, we obtain the following corollary.
	\begin{corollary}
	\label{corollary:LB}
	    In the CBAI setting, the average sample complexity of any $\delta$-PAC procedure is asymptotically lower-bounded as
        \begin{align}
        \label{eq:corollary_LB}
			\lim\limits_{\delta\rightarrow 0}\;\frac{\E[\tau]}{\log(1/\delta)}\;\geq\; \sum\limits_{i\in[K]}\Bigg(\frac{\sqrt{2}\sigma}{\max\{\tilde\Delta_i,\tilde\Delta_{b^\star}\}-\Omega(\sigma\varepsilon\sqrt{\log(1/\varepsilon)})}\Bigg )^2\ ,
		\end{align} 
		where we have defined 
		\begin{align}
			\tilde\Delta_i\;\triangleq\; \mu_{a^\star}-\mu_i\ ,\quad\text{for all } i\in[K]\ .
		\end{align}
	\end{corollary}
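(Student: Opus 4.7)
The plan is to derive the corollary as a direct consequence of Theorem~\ref{theorem:LB} by substituting a concrete estimate for the identifiability radii $\{U_i : i\in[K]\}$ that govern the PIBAI gaps. Theorem~\ref{theorem:LB} already establishes that every $\delta$-PAC procedure satisfies $\lim_{\delta\to 0}\E[\tau]/\log(1/\delta) \geq H$, with $H = \sum_{i\in[K]} \bigl(\sqrt{2}\sigma/\max\{\Delta_i,\Delta_{b^\star}\}\bigr)^2$ and $\Delta_i = (\mu_{a^\star}-U_{a^\star})-(\mu_i+U_i)$. Hence the task reduces to translating the PIBAI gaps $\Delta_i$ into the canonical stochastic gaps $\tilde\Delta_i = \mu_{a^\star}-\mu_i$ under Huber's $\varepsilon$-contamination model.

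First, I would invoke the information-theoretic identifiability result for $\sigma$-sub-Gaussian distributions under Huber contamination (cited from~\cite{diakonikolas2019recent}), which implies that for every arm $i\in[K]$ the minimax estimation radius satisfies $U_i = \Omega\bigl(\sigma\varepsilon\sqrt{\log(1/\varepsilon)}\bigr)$ and, crucially, all the $U_i$ share this common asymptotic rate since the sub-Gaussian parameter $\sigma$ is fixed across arms. Writing $U\triangleq \Omega\bigl(\sigma\varepsilon\sqrt{\log(1/\varepsilon)}\bigr)$ for this common order, one has $U_{a^\star}+U_i = \Omega\bigl(\sigma\varepsilon\sqrt{\log(1/\varepsilon)}\bigr)$, and therefore
\begin{align*}
\Delta_i \;=\; \tilde\Delta_i - (U_{a^\star}+U_i) \;=\; \tilde\Delta_i - \Omega\!\left(\sigma\varepsilon\sqrt{\log(1/\varepsilon)}\right)\ .
\end{align*}
Substituting this identity termwise into the definition of $H$, and noting that $\max\{\Delta_i,\Delta_{b^\star}\} = \max\{\tilde\Delta_i,\tilde\Delta_{b^\star}\} - \Omega\bigl(\sigma\varepsilon\sqrt{\log(1/\varepsilon)}\bigr)$ because the same additive offset is subtracted from each argument of the max, yields the claimed bound~(\ref{eq:corollary_LB}).

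Two small points need care. First, the assumption stated earlier in Section~\ref{Sec: Formulation}, namely $(\mu_{a^\star}-U_{a^\star})>(\mu_a+U_a)$ for every suboptimal $a$, guarantees that each $\Delta_i$ (for $i\neq a^\star$) is strictly positive, so the denominators in~(\ref{eq:corollary_LB}) are well defined and the right-hand side is finite; this also ensures the contamination level $\varepsilon$ operates in the feasible identifiability regime. Second, the appearance of an $\Omega(\cdot)$ inside the denominator is to be understood in the standard sense of an asymptotically matching lower bound on the estimation bias, which is why the inequality in~(\ref{eq:corollary_LB}) is phrased as a lower bound rather than an equality.

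The only non-routine ingredient is the invocation of the $\Omega\bigl(\sigma\varepsilon\sqrt{\log(1/\varepsilon)}\bigr)$ lower bound on the identifiability radius for sub-Gaussian distributions under Huber contamination, which is imported as a black box from~\cite{diakonikolas2019recent}; the rest is algebraic substitution into Theorem~\ref{theorem:LB}. I expect no technical obstacle beyond ensuring that the identifiability rate cited from the robust-statistics literature is applied uniformly across arms and that the resulting PIBAI gaps remain positive under the standing assumption on $\varepsilon$.
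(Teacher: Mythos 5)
Your proposal matches the paper's own justification: the corollary is obtained exactly by substituting the minimax identifiability radius $U_i=\Omega\bigl(\sigma\varepsilon\sqrt{\log(1/\varepsilon)}\bigr)$ from~\cite{diakonikolas2019recent} into the gaps $\Delta_i=\tilde\Delta_i-(U_{a^\star}+U_i)$ of Theorem~\ref{theorem:LB}, using that in the CBAI setting all arms share a common uncertainty $U_i=U$ so the offset commutes with the max. This is correct and is essentially the same (inline, substitution-based) argument the paper gives.
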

	The lower-bound in (\ref{eq:corollary_LB}) is a generalization of the lower bound provided in~\cite{pmlr-v49-garivier16a} for BAI in the attack-free setting. Specifically, setting $\varepsilon=0$ in Corollary \ref{corollary:LB} simplifies (\ref{eq:corollary_LB}) to the lower bound for BAI in the non-contaminated setup as provided in \cite{pmlr-v49-garivier16a}.
	
	Next, we provide a non-asymptotic concentration bound for the $\alpha$-trimmed mean estimator, which is the key lemma used for analyzing the procedures proposed in Section \ref{sec:algorithms}. For the following concentration on the $\alpha$-trimmed mean estimator, we are considering a single arm with mean $\mu$ that we are trying to estimate. The true measure of the arm is denoted by $\P_{\sf T}$.
	\begin{lemma}[Estimator concentration]\label{theorem:est_con}
		In the presence of an oblivious adversary, for the $\alpha$-trimmed mean estimator with $\alpha=\varepsilon/2$, there exists $T(\alpha,\delta)\in\N$ such that for all $t>T(\alpha,\delta)$, we have
		\begin{align}
			\P_{\sf T}\Bigg\{\Big\lvert\hat\mu_t-\mu\Big\rvert\geq \mcO\bigg(\sigma\varepsilon\sqrt{\log\frac{1}{\varepsilon}}\bigg)+\frac{\sigma}{1-\varepsilon}\sqrt{\frac{2}{t}\log\frac{2}{\delta}}\Bigg\}\;\leq\;\delta\ .
		\end{align} 
	\end{lemma}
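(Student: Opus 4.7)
My plan is to decompose the error $\hat\mu_t - \mu$ into three pieces, each controlled by a standard tool, and then to combine them via a union bound.

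First, introduce the population $\alpha$-trimmed mean of $\P_{\sf T}$,
\[
\mu_\alpha^\star \;\triangleq\; \frac{1}{1-2\alpha}\int_{q_\alpha}^{q_{1-\alpha}} x\, \mathrm{d}\P_{\sf T}(x),
\]
where $q_\alpha$ and $q_{1-\alpha}$ denote the $\alpha$- and $(1-\alpha)$-quantiles of $\P_{\sf T}$. Write $\hat\mu_t - \mu = (\hat\mu_t - \mu_\alpha^\star) + (\mu_\alpha^\star - \mu)$. The second term is the \emph{trimming bias}: since $\P_{\sf T}$ is $\sigma$-sub-Gaussian, $|q_\alpha - \mu|$ and $|q_{1-\alpha} - \mu|$ are at most $\sigma\sqrt{2\log(1/\alpha)}$, so integrating the two tails yields $|\mu_\alpha^\star - \mu| = \mathcal{O}(\sigma\alpha\sqrt{\log(1/\alpha)})$. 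With $\alpha=\varepsilon/2$, this coincides with the first term of the claimed bound.

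Second, I would control $\hat\mu_t - \mu_\alpha^\star$ on a high-probability event. A Chernoff bound applied to $\sum_{s\leq t} D_s \sim \mathrm{Bin}(t,\varepsilon)$ shows that, with probability at least $1-\delta/3$, the empirical fraction of contaminated samples is at most $\varepsilon + \mathcal{O}(\sqrt{\log(3/\delta)/t})$. Independently, a Dvoretzky--Kiefer--Wolfowitz (DKW)-type quantile-concentration bound yields, with probability at least $1-\delta/3$, that the empirical $\alpha$- and $(1-\alpha)$-quantiles of the observed samples are within $\mathcal{O}(\sigma\sqrt{\log(1/\delta)/t})$ of $q_\alpha$ and $q_{1-\alpha}$. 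The choice $T(\alpha,\delta)=(2/\alpha^2)\log(1/\delta)$ is dictated by the requirement that both slack terms be no larger than $\alpha$, so on the intersection event $\bY^t$ comprises clean samples lying essentially in $[q_\alpha,q_{1-\alpha}]$, plus at most $\mathcal{O}(\varepsilon t)$ surviving contaminated samples, each no further than $\mathcal{O}(\sigma\sqrt{\log(1/\varepsilon)})$ from $\mu$. After normalization by $(1-2\alpha)t = (1-\varepsilon)t$, the contribution of the surviving contaminants to $\hat\mu_t - \mu_\alpha^\star$ is at most $\mathcal{O}(\sigma\varepsilon\sqrt{\log(1/\varepsilon)})$, which is absorbed into the first term of the claimed bound.

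Third, the remaining stochastic fluctuation is the trimmed empirical mean of the clean samples minus $\mu_\alpha^\star$. Each such sample is $\sigma$-sub-Gaussian and, on the good event, the trimming window is essentially deterministic, so Hoeffding's inequality applied to the $(1-\varepsilon)t$ surviving clean samples yields a deviation bounded by $\tfrac{\sigma}{1-\varepsilon}\sqrt{(2/t)\log(2/\delta)}$ with probability at least $1-\delta/3$; the $1/(1-\varepsilon)$ factor comes exactly from the normalization in~(\ref{eq:estimator}). A union bound over the three failure events yields the lemma for $t>T(\alpha,\delta)$. The main obstacle is the second step: because the adversary can cluster $\mathbb{Q}$ at one tail, a direct sub-Gaussian concentration on $\hat\mu_t$ cannot prevent an $\mathcal{O}(1)$ bias. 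It is precisely the Chernoff bound on the number of corruptions, combined with quantile concentration and the matching $2\alpha=\varepsilon$, that pins the trimming window near $[q_\alpha,q_{1-\alpha}]$ and forces the surviving contamination to be small in both count and magnitude.
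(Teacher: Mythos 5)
Your overall architecture --- a three-way split into a deterministic trimming bias of order $\mcO(\sigma\varepsilon\sqrt{\log(1/\varepsilon)})$, a contribution from surviving contaminated samples of the same order, and a sub-Gaussian fluctuation of order $\frac{\sigma}{1-\varepsilon}\sqrt{(2/t)\log(2/\delta)}$, glued by a union bound, with $T(\alpha,\delta)$ arising from a Chernoff-type count --- mirrors the paper's. The decisive difference is where the trimming window is anchored, and that is where your argument breaks. The paper anchors it at the \emph{true mean}: it fixes the interval $E=[\mu-\sigma\sqrt{2\log(2/\alpha)},\,\mu+\sigma\sqrt{2\log(2/\alpha)}]$, shows via Hoeffding on the indicators of clean samples escaping $E$ that with probability $1-\delta$ at most $\alpha t$ of them do so once $t>T(\alpha,\delta)$, argues that the trimmed set $\mcA_t$ then lies inside $E$, and uses the resilience property of sub-Gaussian laws to bound the conditional-mean bias $\lvert\E[X\mid \mcG_t\cap E]-\mu\rvert=\mcO(\sigma\alpha\sqrt{\log(1/\alpha)})$. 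Every surviving contaminated point is then within $\sigma\sqrt{2\log(2/\alpha)}$ of $\mu$ by construction, so the at most $\varepsilon t$ of them contribute $\mcO(\sigma\varepsilon\sqrt{\log(1/\varepsilon)})$ after normalization.

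You instead anchor the window at the population quantiles $q_\alpha,q_{1-\alpha}$ of $\P_{\sf T}$ and invoke DKW to claim that the empirical quantiles of the \emph{observed} samples land within $\mcO(\sigma\sqrt{\log(1/\delta)/t})$ of them. This step fails. The observed samples follow the mixture $(1-\varepsilon)\P_{\sf T}+\varepsilon\mathbb{Q}$, so DKW concentrates the empirical CDF around the \emph{mixture} CDF, whose lower quantiles the adversary controls: if $\mathbb{Q}$ places all its mass at $-M$ for large $M$, the bottom $\approx\varepsilon t$ order statistics are adversarial, the empirical $(\varepsilon/2)$-quantile sits inside the adversarial cluster, and trimming $\alpha=\varepsilon/2$ from each end leaves $\approx(\varepsilon/2)t$ surviving contaminated points at distance $M$ from $\mu$ --- not $\mcO(\sigma\sqrt{\log(1/\varepsilon)})$ as you assert. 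Your closing remark correctly identifies one-sided clustering as the main obstacle, but the Chernoff-plus-DKW combination does not defuse it, because nothing in your argument ties the trimming window to a neighborhood of $\mu$ rather than to the corrupted empirical distribution. (Two secondary issues: converting DKW's CDF deviation into a location deviation of the quantiles requires a density lower bound near $q_\alpha$, which general sub-Gaussian --- e.g.\ discrete --- laws need not satisfy; and your Hoeffding step treats the surviving clean samples as i.i.d.\ from a fixed window, whereas the paper separately bounds the clean points removed by the data-dependent trimming and corrects the conditional-mean bias via resilience.) To repair the proof you would need to emulate the paper's device of a fixed interval centered at $\mu$ that provably contains all surviving points, or enlarge the trimming fraction so that one-sided contamination is removed.
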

	\begin{proof}
		See Appendix~\ref{app:est_con}.
	\end{proof}
	From the above concentration on the $\alpha$-trimmed mean estimator, we see that the bound is valid when the number of samples is at least $T(\alpha,\delta)$. This minimum number of samples ensures that the $\alpha$-trimmed mean estimator successfully distills the outliers with a high probability. In other words, given that the total number of samples is at least $T(\alpha,\delta)$, the probability that any sample from the sequence $\bY_i^t$ is not adversarial is no less than $1-\delta$. Thus, with a high probability, we compute the sample mean of the uncontaminated samples, which are generated by the true model $\P_{\sf T}$. It is noteworthy that the uncertainty induced as a result of using the $\alpha$-trimmed mean estimator, i.e., $U_i=\mcO(\sigma\varepsilon\sqrt{\log(1/\varepsilon)})$ matches the universal lower bound on the uncertainty that can be induced by any estimator in the Huber's contamination model (\ref{eq:Huber}) under the $\sigma$-sub-Gaussian assumption ~\cite{diakonikolas2019recent}. We remark that the uncertainty term can be improved by using the empirical median as an estimator, in which case it is of the order $\mcO(\varepsilon/(1-\varepsilon))$~\cite{CBAI}. This is, albeit, viable at the expense of stricter assumptions on the class of bandit instances. Lemma \ref{theorem:est_con} is a significant improvement over the existing concentrations on the trimmed mean estimator. An existing bound that is most relevant to the scope of this paper can be found in [\cite{niss20a}, Theorem 1]. Specifically, the concentration bound involves a $\log t$ term, which increases significantly for large $t$. The tightness in the bound provided in Lemma \ref{theorem:est_con} is a result of considering the uncertainty $U_i$ in estimating the true mean. This is supported by proving that the trimmed mean achieves the lower bound on the minimum uncertainty in the sub-Gaussian setting.
	
	The next theorem characterizes an appropriate choice of the width of the confidence intervals $\beta_i(t,\delta)$, such that a $\delta$-PAC guarantee can be ensured by Algorithm \ref{algorithm:1} in the PIBAI setting.
	\begin{theorem}[$\delta$-PAC]
		\label{theorem:PAC_gap}
		For any $\delta\in(0,1)$, the procedure proposed in Algorithm \ref{algorithm:1} is $\delta$-PAC in the PIBAI setting for the choice of exploration threshold
		\begin{align}
		\label{eq:conf_algo1}
			\beta_i(t,\delta)\;\triangleq\; \frac{\sigma}{1-\varepsilon}\sqrt{\frac{2}{N_i(t)}\log\frac{(K-1)Ct^\beta}{\delta}}\ ,\quad\text{for all } i\in[K]\ ,
		\end{align}
		for any $\beta>1$, where $C\triangleq (1+(1-\beta)^{-1})$.
	\end{theorem}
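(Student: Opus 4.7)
The plan is to reduce the analysis to a single ``good event'' on which every arm's $\alpha$-trimmed mean lies inside its uncertainty-augmented confidence interval, and then to verify that on this event the stopping rule $B_\tau\leq 0$ automatically certifies the PAC condition. I would first observe that the sampling rule in (\ref{eq:sampling rule}) together with the lower floor $\tau\geq KT(\alpha,\delta)$ in (\ref{eq:stopping rule}) forces $N_i(t)\geq T(\alpha,\delta)$ for every arm $i\in[K]$ before any exploitation step or stopping decision is taken, which is precisely the hypothesis required to invoke Lemma~\ref{theorem:est_con}. Applying that lemma at effective confidence parameter $\delta_t \propto \delta/\bigl((K-1)C t^{\beta}\bigr)$, the resulting width matches $\beta_i(t,\delta)$ in (\ref{eq:conf_algo1}) with associated uncertainty $U_i=\mcO\bigl(\sigma\varepsilon\sqrt{\log(1/\varepsilon)}\bigr)$, so that
\begin{align*}
\P\Bigl(|\hat\mu_i(t)-\mu_i|\;\geq\; U_i+\beta_i(t,\delta)\Bigr)\;\leq\;\frac{\delta}{(K-1)Ct^{\beta}}\ .
\end{align*}

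Second, I would define the good event
\begin{align*}
\mathcal{G}\;\triangleq\;\bigcap_{i\in[K]}\bigcap_{t\geq T(\alpha,\delta)}\Bigl\{|\hat\mu_i(t)-\mu_i|\leq U_i+\beta_i(t,\delta)\Bigr\}\ ,
\end{align*}
and apply a union bound across the $K$ arms and $t\geq 1$, using $\sum_{t\geq 1}t^{-\beta}\leq 1+(\beta-1)^{-1}=C$ (this summability is the very purpose of the $t^\beta$ inflation inside the logarithm in (\ref{eq:conf_algo1})). This yields $\P(\mathcal{G}^c)\leq\delta$ up to absolute constants absorbed into $C$. On $\mathcal{G}$, suppose the algorithm stops at time $\tau$ with output $\hat a_\tau$; if $\hat a_\tau=a^\star$, the PAC inequality is immediate from $U_{a^\star}\geq 0$, so suppose $\hat a_\tau\neq a^\star$. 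Because $a^\star\in[K]\setminus\hat a_\tau$ and $B_\tau\leq 0$ is the \emph{maximum} of the pairwise overlaps, the arm $a^\star$ also satisfies
\begin{align*}
\hat\mu_{a^\star}(\tau)+\beta_{a^\star}(\tau,\delta)\;\leq\;\hat\mu_{\hat a_\tau}(\tau)-\beta_{\hat a_\tau}(\tau,\delta)\ .
\end{align*}
Bounding the left-hand side below by $\mu_{a^\star}-U_{a^\star}$ and the right-hand side above by $\mu_{\hat a_\tau}+U_{\hat a_\tau}$ via the two-sided control supplied by $\mathcal{G}$ gives $\mu_{\hat a_\tau}+U_{\hat a_\tau}\geq\mu_{a^\star}-U_{a^\star}$, which is exactly the PAC condition. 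Thus the PAC-failure event is contained in $\mathcal{G}^c$, yielding $\P(\text{PAC fails})\leq\delta$.

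The main obstacle is the uniform-in-$t$ concentration: because $\tau$ is a data-dependent stopping time, a single-$t$ application of Lemma~\ref{theorem:est_con} is insufficient, and the role of $t^\beta$ inside the logarithm of $\beta_i(t,\delta)$ is specifically to make the union-bound series $\sum_t t^{-\beta}$ summable with sum $C$. A secondary technical point is that Lemma~\ref{theorem:est_con} is only valid once at least $T(\alpha,\delta)$ samples have been collected from an arm, so the forced-exploration set $\mcU_t$ and the $KT(\alpha,\delta)$ floor in (\ref{eq:stopping rule}) are indispensable; without them, the $\alpha$-trimmed-mean tail bound underlying $\beta_i(t,\delta)$ would not be available and the argument would collapse.
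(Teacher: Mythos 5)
Your proposal is correct and follows essentially the same route as the paper: define the good event on which every $\alpha$-trimmed mean satisfies $|\hat\mu_i(t)-\mu_i|\leq U_i+\beta_i(t,\delta)$ uniformly over $t>T(\alpha,\delta)$, control its complement by a union bound over arms and times using $\sum_t t^{-\beta}\leq 1+(\beta-1)^{-1}=C$, and observe that on this event the stopping condition $B_\tau\leq 0$ (a maximum over $a\in[K]\setminus\hat a_\tau$, hence dominating the $a^\star$ overlap) forces $\mu_{\hat a_\tau}+U_{\hat a_\tau}\geq\mu_{a^\star}-U_{a^\star}$. Your remarks on why the $t^\beta$ inflation and the $T(\alpha,\delta)$ floor are needed match the paper's reasoning exactly, so no further comparison is warranted.
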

	\begin{proof}
		See Appendix~\ref{app:PAC_gap}.
	\end{proof}
	Next, we provide an upper bound on the average sample complexity of Algorithm \ref{algorithm:1} in the asymptote of diminishing error probability $\delta\rightarrow 0$.  
	\begin{theorem}[Sample complexity]
		\label{theorem:SC_gap}
		In the asymptote of $\delta\rightarrow 0$ and for any $0<\varepsilon<1/2$, the average sample complexity of the procedure proposed in Algorithm \ref{algorithm:1} is upper bounded as
		\begin{align}
		\label{eq:ach_gap}
			\lim\limits_{\delta\rightarrow 0}\;\frac{\E[\tau]}{\log(1/\delta)}\;\leq\;\max\bigg\{\frac{8K}{\varepsilon^2},\;64\beta H\bigg\}\ .
		\end{align}
	\end{theorem}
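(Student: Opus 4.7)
The plan is to decompose $\tau$ into contributions from the forced-exploration safeguard in (\ref{eq:stopping rule}) and from the gap-based exploitation branch, and to bound each separately. The outer maximum $2K/\alpha^2\log(1/\delta)=8K/\varepsilon^2\log(1/\delta)$ (with $\alpha=\varepsilon/2$) immediately accounts for the first term inside the $\max$. For the exploitation contribution, I would construct a good event $\mathcal{E}$ on which Lemma~\ref{theorem:est_con}, transferred via a union bound over time (made summable by the $t^{\beta}$ factor with $\beta>1$ in the confidence width), holds simultaneously for every arm $i$ and every round $t\geq T(\alpha,\delta)$ as $|\hat\mu_i(t)-\mu_i|\leq U_i+\beta_i(t,\delta)$, with $\P(\mathcal{E}^c)\leq\delta$. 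On $\mathcal{E}$, once the forced-exploration sub-phase has pulled every arm at least $T(\alpha,\delta)$ times, assumption (i) forces $\hat a_t=a^\star$.

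\textbf{Per-pull certificate.} The core lemma I would prove is: on $\mathcal{E}$, for every round $t+1$ executed in the exploitation branch (so $\mcU_t=\emptyset$ and $B_t>0$), the selected arm satisfies $\beta_{A_{t+1}}(t,\delta)>\Delta_{j_t}/4$. Starting from $B_t>0$ with $\hat a_t=a^\star$, i.e.\ $\hat\mu_{j_t}(t)+\beta_{j_t}(t,\delta)>\hat\mu_{a^\star}(t)-\beta_{a^\star}(t,\delta)$, and substituting $\hat\mu_{j_t}\leq\mu_{j_t}+U_{j_t}+\beta_{j_t}$ and $\hat\mu_{a^\star}\geq\mu_{a^\star}-U_{a^\star}-\beta_{a^\star}$ from $\mathcal{E}$, a rearrangement yields
\begin{align*}
2\bigl(\beta_{j_t}(t,\delta)+\beta_{a^\star}(t,\delta)\bigr) \;>\; (\mu_{a^\star}-U_{a^\star})-(\mu_{j_t}+U_{j_t}) \;=\; \Delta_{j_t}\ .
\end{align*}
The sampling rule (\ref{eq:sampling rule}) picks $A_{t+1}$ to be whichever of $\{a^\star,j_t\}$ has the larger confidence width, hence $\beta_{A_{t+1}}(t,\delta)\geq\tfrac{1}{2}(\beta_{a^\star}(t,\delta)+\beta_{j_t}(t,\delta))$, which gives the claim.

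\textbf{From the certificate to a total count.} Substituting the explicit form (\ref{eq:conf_algo1}) into $\beta_{A_{t+1}}(t,\delta)>\Delta_{j_t}/4$ and solving for $N_{A_{t+1}}(t)$ produces
\begin{align*}
N_{A_{t+1}}(t) \;<\; \frac{32\sigma^2}{(1-\varepsilon)^2\Delta_{j_t}^2}\log\frac{(K-1)Ct^{\beta}}{\delta}\ .
\end{align*}
A short case split -- $A_{t+1}=a^\star$, in which case $\Delta_{j_t}\geq\Delta_{b^\star}$ since $b^\star$ realizes the minimum positive gap, versus $A_{t+1}\neq a^\star$, in which case $j_t=A_{t+1}$ and $\Delta_{j_t}=\Delta_{A_{t+1}}$ -- consolidates into the uniform bound $N_i(\tau)\leq \frac{32\sigma^2}{(1-\varepsilon)^2\max\{\Delta_i,\Delta_{b^\star}\}^2}\log\frac{(K-1)C\tau^{\beta}}{\delta}$. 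Summing over $i$ and invoking the definition of $H$ gives the implicit inequality $\tau\leq\frac{16H}{(1-\varepsilon)^2}\log\frac{(K-1)C\tau^{\beta}}{\delta}$. Using $\varepsilon<1/2$ so that $16/(1-\varepsilon)^2<64$, together with $\log(1/\delta)+\beta\log\tau\leq\beta\log(\tau/\delta)$ for $\beta>1$, yields $\tau\leq 64\beta H\log(\tau/\delta)+O(1)$; a standard inversion (the inequality itself forces $\tau=O(\log(1/\delta))$, so $\log\tau=o(\log(1/\delta))$) gives $\lim_{\delta\to 0}\tau/\log(1/\delta)\leq 64\beta H$ on $\mathcal{E}$. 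Combining with the forced-exploration floor and absorbing the $\mathcal{E}^c$ contribution (probability $\leq\delta$) against any polynomial worst-case control of $\tau$ produces the announced $\max\{8K/\varepsilon^2,\,64\beta H\}$.

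\textbf{Main obstacle.} I expect the per-pull certificate to be the delicate step: the stopping rule uses $\beta_i(t,\delta)$ without an additive $U_i$, while Lemma~\ref{theorem:est_con} delivers concentration of total width $U_i+\beta_i(t,\delta)$. Folding the two $U_i$ contributions on the right-hand side into the definition $\Delta_{j_t}=\tilde\Delta_{j_t}-U_{a^\star}-U_{j_t}$ (rather than leaving an unwanted $\tilde\Delta_{j_t}$) is what lets the subsequent summation reconstruct the complexity $H$ and match Corollary~\ref{corollary:LB} up to the constant factor $64\beta$.
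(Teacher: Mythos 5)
Your counting argument (the per-pull certificate $\beta_{A_{t+1}}(t,\delta)>\Delta_{j_t}/4$, the case split on whether $A_{t+1}=a^\star$ or $A_{t+1}=j_t$, the reconstruction of $H$ by summation, and the inversion of the implicit inequality $\tau\lesssim \frac{16H}{(1-\varepsilon)^2}\log\frac{(K-1)C\tau^\beta}{\delta}$) is essentially the paper's argument and the constants come out the same. The genuine gap is your claim that on $\mcE$, ``once the forced-exploration sub-phase has pulled every arm at least $T(\alpha,\delta)$ times, assumption (i) forces $\hat a_t=a^\star$.'' This does not follow: the event $\mcE$ only gives $\lvert\hat\mu_i(t)-\mu_i\rvert\leq U_i+\beta_i(t,\delta)$, so to conclude $\hat\mu_{a^\star}(t)>\hat\mu_i(t)$ you would need $\beta_{a^\star}(t,\delta)+\beta_i(t,\delta)<\Delta_i$, and after only $T(\alpha,\delta)=\frac{2}{\alpha^2}\log\frac{1}{\delta}$ pulls the width $\beta_i(t,\delta)$ is of order $\sigma\alpha/(1-\varepsilon)$ --- a constant that does not shrink as $\delta\to 0$ and need not be smaller than the gaps of the instance. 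Since your per-pull certificate is derived under $\hat a_t=a^\star$ (otherwise the rearrangement produces $(\mu_{a^\star}-U_{a^\star})-(\mu_{\hat a_t}+U_{\hat a_t})$ in place of $\Delta_{j_t}$ and an extra nonnegative term that breaks the bound), the rounds on which the empirical leader is wrong are simply unaccounted for, and the total count does not close.

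The paper closes exactly this hole, and this is the one place where its argument genuinely differs from yours: it defines the random time $T$ after which $\hat a_t=a^\star$ permanently, and uses the $\sqrt{t}$ part of the forced-exploration rule (not the $T(\alpha,\delta)$ floor) to get $N_i(t)\geq\sqrt{t}$ for all arms, whence $\P(T>t)\lesssim K^2(M\sqrt{t-1}+1)e^{-M\sqrt{t-1}}$ and $\E[T]<\infty$ uniformly in $\delta$; the extra $KT$ rounds then vanish after dividing by $\log(1/\delta)$. If you want to repair your write-up you need this (or an equivalent) control of the pre-stabilization phase --- your proposal never uses the $\sqrt{t}$ exploration at all, which is a sign the ingredient is missing. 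A secondary, smaller gap: the theorem bounds $\E[\tau]$, and your plan to ``absorb the $\mcE^c$ contribution against any polynomial worst-case control of $\tau$'' presumes a worst-case bound on $\tau$ that is never established (on the bad event the stopping time has no a priori deterministic bound); the paper instead argues through almost-sure bounds in the limit $\delta\to 0$ before taking expectations.
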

	\begin{proof}
		The details of the proof are provided in Appendix~\ref{app:Sc_gap}. The crux of the analysis lies in the fact that the additivity property of suboptimality gaps is not satisfied in the CBAI setting [\cite{CBAI}, Remark 4], which becomes a key component of the analysis for the attack-free counterpart of Algorithm~1. Specifically, it can be shown that in the attack-free setting, a fixed exploration phase of one sample per arm followed by the exploitation phase is sufficient to prove that Algorithm \ref{algorithm:1} is instance-optimal up to logarithmic factors. The general analysis follows a similar line of argument presented in Appendix ~\ref{app:Sc_gap}, combined with the fact that in the attack-free setting ($U_i=0$ for every $i\in[K]$),  $\mu_i-\mu_j\;=\;\Delta_j - \Delta_i$.
		%\begin{align}
		%	\Delta(i,j)\;=\;\Delta_j - \Delta_i\ ,
		%\end{align}
		However, in the CBAI setting, this relationship is no longer valid. The exploration phase of Algorithm \ref{algorithm:1} circumvents this difficulty in the analysis, by noting that visiting each arm at least $\sqrt{t}$ number of times ensures that we have $\Delta_{\hat a_t}=\Delta_{a^\star}$ asymptotically as $\delta\rightarrow 0$. This is the key ingredient that differentiates the analysis of Algorithm \ref{algorithm:1} in the CBAI context compared to the attack-free counterpart. This is also the key reason that we are only able to provide asymptotic results on the average sample complexity for the CBAI setting, as opposed to non-asymptotic high probability upper bounds on the sample complexity matching up to logarithmic factors in the attack-free scenario.
	\end{proof}
	Theorem \ref{theorem:LB} and Theorem \ref{theorem:SC_gap} establishes that Algorithm~\ref{algorithm:1} is asymptotically optimal (up to constant factors) in the limit of $\delta\rightarrow 0$ for any bandit instance satisfying the property that $H>K/8\beta\varepsilon^2$. We remark that the first term in the sample complexity bound in (\ref{eq:ach_gap}) is attributed to the property of the estimator, and hence, is a penalty that we need to pay as a result of the contamination. Note that in the attack-free scenario ($\varepsilon=0$), the trimmed mean estimator simplifies to the sample-mean estimator, in which case it is well-known that we do not have the first term inside the maximization. Hence, the above upper-bound is only valid for $\varepsilon>0$. The next corollary explicates the relationship between the average sample complexity and the fraction of adversarial samples $\varepsilon$. This is a direct result of Lemma \ref{theorem:est_con} and Theorem \ref{theorem:PAC_gap}.
	\begin{corollary}
		In the asymptote of $\delta\rightarrow 0$, there exists a constant $C_1\in\R_+$ such that the average sample complexity of Algorithm \ref{algorithm:1} is upper bounded as
		\begin{align}
			\lim\limits_{\delta\rightarrow 0}\;\frac{\E[\tau]}{\log(1/\delta)}\;\leq\;\max\Bigg\{\frac{8K}{\varepsilon^2},\;64\beta \sum\limits_{i\in[K]}\Bigg (\frac{\sqrt{2}\sigma}{\max\{\tilde\Delta_i,\tilde\Delta_{b^\star}\}-2C_1\sigma\varepsilon\sqrt{\log(1/\varepsilon)}}\Bigg )^2\Bigg\}\ ,
		\end{align} 
		where we have defined 
		\begin{align}
			\tilde\Delta_i\;\triangleq\; \mu_{a^\star}-\mu_i\quad\text{for all }i\in[K]\ .
		\end{align}
	\end{corollary}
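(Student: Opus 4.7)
The plan is to obtain the corollary as a direct specialization of Theorem~\ref{theorem:SC_gap} to the CBAI setting, by rewriting the abstract uncertainty-adjusted gaps $\Delta_i$ appearing in the problem complexity $H$ in terms of the raw mean differences $\tilde\Delta_i = \mu_{a^\star}-\mu_i$. The overall strategy is essentially algebraic: Theorem~\ref{theorem:SC_gap} already provides the upper bound $\max\{8K/\varepsilon^2,\,64\beta H\}$, and the task reduces to producing an explicit upper estimate of $H$ in terms of $\tilde\Delta_i$ and $\varepsilon$. The quantitative control on the uncertainty term $U_i$ that enters $\Delta_i$ is supplied by Lemma~\ref{theorem:est_con}.

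First, I would start from Theorem~\ref{theorem:SC_gap} applied in the CBAI setting and recall that $H=\sum_{i\in[K]}(\sqrt{2}\sigma/\max\{\Delta_i,\Delta_{b^\star}\})^2$ with $\Delta_i=(\mu_{a^\star}-U_{a^\star})-(\mu_i+U_i)$. Next, I would unpack the $\mathcal{O}(\cdot)$ bias bound from Lemma~\ref{theorem:est_con}: since the lemma asserts $U_i=\mathcal{O}(\sigma\varepsilon\sqrt{\log(1/\varepsilon)})$, there exists a universal constant $C_1\in\R_+$ such that $U_i\leq C_1\sigma\varepsilon\sqrt{\log(1/\varepsilon)}$ for every $i\in[K]$. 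Crucially, in the CBAI setting the contamination probability $\varepsilon$ and the sub-Gaussian parameter $\sigma$ do not depend on the arm, so all $U_i$ take a common value $U$; hence $\Delta_i=\tilde\Delta_i-2U\geq \tilde\Delta_i-2C_1\sigma\varepsilon\sqrt{\log(1/\varepsilon)}$.

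Finally, I would combine the previous step with the monotonicity of the $\max$ operator to obtain
\begin{align*}
\max\{\Delta_i,\Delta_{b^\star}\}\;\geq\;\max\{\tilde\Delta_i,\tilde\Delta_{b^\star}\}-2C_1\sigma\varepsilon\sqrt{\log(1/\varepsilon)}\ ,
\end{align*}
and therefore
\begin{align*}
H\;\leq\;\sum\limits_{i\in[K]}\Bigg(\frac{\sqrt{2}\sigma}{\max\{\tilde\Delta_i,\tilde\Delta_{b^\star}\}-2C_1\sigma\varepsilon\sqrt{\log(1/\varepsilon)}}\Bigg)^2\ .
\end{align*}
Plugging this estimate back into the bound from Theorem~\ref{theorem:SC_gap} yields the stated corollary. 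Since the argument is a routine algebraic substitution, no real obstacle is expected; the only subtlety is checking that the denominator $\max\{\tilde\Delta_i,\tilde\Delta_{b^\star}\}-2C_1\sigma\varepsilon\sqrt{\log(1/\varepsilon)}$ remains strictly positive, which is guaranteed by Assumption~(i) of the setting (the best arm is preserved under contamination, i.e., $\mu_{a^\star}-U_{a^\star}>\mu_a+U_a$) together with the regime $\varepsilon<1/2$ imposed in the theorem's hypothesis.
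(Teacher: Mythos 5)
Your proposal is correct and follows exactly the route the paper intends: the paper gives no separate proof, stating the corollary is "a direct result" of Lemma~\ref{theorem:est_con} and the sample-complexity theorem, and your substitution of $U_i\leq C_1\sigma\varepsilon\sqrt{\log(1/\varepsilon)}$ into $H$ via $\max\{\Delta_i,\Delta_{b^\star}\}=\max\{\tilde\Delta_i,\tilde\Delta_{b^\star}\}-2U$ is precisely that direct argument. Your remark on checking positivity of the denominator is a sensible extra care the paper itself omits.
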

	
	Next, we provide the performance guarantees corresponding to Algorithm \ref{algorithm:2}. First, we provide a choice of $\gamma_t$ that ensures that Algorithm \ref{algorithm:2} is $\delta$-PAC in the PIBAI setting. 
	\begin{theorem}[$\delta$-PAC]
		\label{theorem:PAC_SE}
		For any $\delta\in(0,1)$, Algorithm \ref{algorithm:2} is $\delta$-PAC in the PIBAI setting for the following choice of the exploration threshold.
		\begin{align}
		\gamma_t\;\triangleq\; \frac{\sigma}{(1-\varepsilon)}\sqrt{\frac{2}{t}\log\frac{Kt^2\pi^2}{12\delta}}\ .
		\end{align}
	\end{theorem}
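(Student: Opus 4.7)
The plan is to show that, with probability at least $1-\delta$, the best arm $a^\star$ survives every round of the successive-elimination loop of Algorithm~\ref{algorithm:2}; this forces $\hat a_\tau=a^\star$ at termination and immediately gives the PIBAI $\delta$-PAC guarantee because $\mu_{a^\star}+U_{a^\star}\geq \mu_{a^\star}-U_{a^\star}$ trivially. The two ingredients are a simultaneous confidence band obtained from Lemma~\ref{theorem:est_con} by a union bound, and the margin provided by Assumption~(i).

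\textbf{Good event via union bound.} For each arm $i\in[K]$ and each round $t\geq T(\alpha,\delta)$, I would apply Lemma~\ref{theorem:est_con} with a per-instance confidence level $\delta_{i,t}$ of order $\delta/(Kt^2)$, so that $\sum_{i\in[K]}\sum_{t\geq 1}\delta_{i,t}\leq\delta$ via $\sum_{t\geq 1}t^{-2}=\pi^2/6$; the exact numerical factor $12$ inside the $\log$ in the prescribed $\gamma_t$ is obtained by tracking which side of the two-sided concentration of Lemma~\ref{theorem:est_con} is actually needed in the elimination step. Define the good event
\begin{align*}
\mathcal{E}\;\triangleq\;\bigcap_{i\in[K]}\bigcap_{t\geq T(\alpha,\delta)}\Big\{\,|\hat\mu_i(t)-\mu_i|\leq U_i+\gamma_t\,\Big\},
\end{align*}
so that $\mathbb{P}(\mathcal{E}^c)\leq\delta$ by the union bound.

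\textbf{Survival of $a^\star$ on $\mathcal{E}$.} Next, I would induct on $t$ to show $a^\star\in\mathcal{M}_t$ on $\mathcal{E}$. The base case $\mathcal{M}_1=[K]$ is trivial. For the inductive step, if $N_{a^\star}(t)<T(\alpha,\delta)$ the second clause of the elimination rule retains $a^\star$ for free; otherwise, for any $j\in\mathcal{M}_t$ with $N_j(t)\geq T(\alpha,\delta)$, combining the lower bound $\hat\mu_{a^\star}(t)\geq \mu_{a^\star}-U_{a^\star}-\gamma_t$ and the upper bound $\hat\mu_j(t)\leq \mu_j+U_j+\gamma_t$ guaranteed by $\mathcal{E}$ with Assumption~(i), $(\mu_{a^\star}-U_{a^\star})>(\mu_j+U_j)$, yields
\begin{align*}
\hat\mu_{a^\star}(t)-\hat\mu_j(t)\;\geq\;(\mu_{a^\star}-U_{a^\star})-(\mu_j+U_j)-2\gamma_t\;>\;-2\gamma_t,
\end{align*}
so the first clause retains $a^\star$; arms $j$ still in the forced-exploration window ($N_j(t)<T(\alpha,\delta)$) can only help $a^\star$ via the second clause, which is symmetric in $i$. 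Hence $a^\star$ is never discarded on $\mathcal{E}$.

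\textbf{Termination and main obstacle.} Since $\gamma_t\to 0$ as $t\to\infty$ and every suboptimal arm has a strictly positive margin $(\mu_{a^\star}-U_{a^\star})-(\mu_j+U_j)>0$, the same Lemma~\ref{theorem:est_con} bound shows (with the roles of $a^\star$ and $j$ swapped) that each such $j$ is eventually eliminated on $\mathcal{E}$, so $\tau<\infty$ and $\mathcal{M}_\tau=\{a^\star\}$, giving $\hat a_\tau=a^\star$ on $\mathcal{E}$. The PIBAI $\delta$-PAC inequality then follows from $\mathbb{P}(\mathcal{E}^c)\leq\delta$. The only genuinely delicate step is the union-bound bookkeeping: Lemma~\ref{theorem:est_con} is stated in two-sided form with $\log(2/\delta)$, whereas the elimination step only requires one side per arm, and matching these against the harmonic sum $\sum_t t^{-2}=\pi^2/6$ is what produces exactly the factor $Kt^2\pi^2/(12\delta)$ inside the logarithm of the prescribed $\gamma_t$; everything else is a direct consequence of Assumption~(i) and the structure of the elimination rule.
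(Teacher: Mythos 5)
Your proposal is correct and follows essentially the same route as the paper's proof: a union bound over arms and rounds applied to Lemma~\ref{theorem:est_con} with per-instance level of order $\delta/(Kt^2)$ (summed via $\sum_t t^{-2}=\pi^2/6$) to build the good event, followed by the chain $\hat\mu_{a^\star}(t)\geq \mu_{a^\star}-U_{a^\star}-\gamma_t\geq \mu_j+U_j-\gamma_t\geq\hat\mu_j(t)-2\gamma_t$ showing $a^\star$ is never eliminated. Your extra remarks on induction, the forced-exploration clause, and termination are harmless additions, and your caveat about the one-sided versus two-sided constant is apt, since the paper's own constant inside the logarithm does not exactly reconcile with Lemma~\ref{theorem:est_con} either.
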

	\begin{proof}
		See Appendix~\ref{app:PAC_SE}.
	\end{proof}
	Finally, we provide a probabilistic upper bound on the sample complexity of Algorithm \ref{algorithm:2}.
	\begin{theorem}[Sample complexity]
		\label{theorem:SC_SE}
		With a probability not smaller than $1-\delta$ and for any $0<\varepsilon<1/2$, the sample complexity of Algorithm \ref{algorithm:2} is upper bounded as
		\begin{align}
			\tau\;\leq\;\max\Bigg\{\frac{8K}{\varepsilon^2}\log\frac{1}{\delta},\;\mcO\Bigg(\sum\limits_{i\in[K]\setminus a^\star}\frac{1}{\Delta_i^2}\log\frac{K}{\delta\Delta_i}\Bigg)\Bigg\}\ .
		\end{align}
	\end{theorem}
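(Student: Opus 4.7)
The plan is to follow the classical successive-elimination template, with the essential new ingredient being that the good event must encode both the statistical fluctuation captured by $\gamma_t$ and the irreducible bias $U_i$ introduced by the oblivious adversary, as quantified by Lemma~\ref{theorem:est_con}. First I would define the good event
\begin{align*}
\mathcal{E}\;\triangleq\;\Big\{\forall\, i\in[K],\ \forall\, t\geq T(\alpha,\delta):\ \lvert\hat\mu_i(t)-\mu_i\rvert\leq U_i+\gamma_t\Big\}\ ,
\end{align*}
and invoke Lemma~\ref{theorem:est_con} at confidence level $\delta_t=12\delta/(K\pi^2 t^2)$ for each pair $(i,t)$. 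A union bound over $i\in[K]$ and $t\geq 1$, combined with $\sum_{t\geq 1}t^{-2}=\pi^2/6$, gives $\P(\mathcal{E})\geq 1-\delta$; this is precisely the choice that produces the $\gamma_t$ prescribed in Theorem~\ref{theorem:PAC_SE}. The remainder of the argument conditions on $\mathcal{E}$.

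Second, I would verify that $a^\star$ is never eliminated by combining $\hat\mu_j(t)\leq\mu_j+U_j+\gamma_t$ with assumption~(i), namely $\mu_j+U_j\leq\mu_{a^\star}-U_{a^\star}$, and with $\hat\mu_{a^\star}(t)\geq\mu_{a^\star}-U_{a^\star}-\gamma_t$; this yields $\hat\mu_{a^\star}(t)\geq\hat\mu_j(t)-2\gamma_t$ for every $j$ and every $t\geq T(\alpha,\delta)$. For every suboptimal $i\neq a^\star$, the same bounds give
\begin{align*}
\max_{j}\hat\mu_j(t)-\hat\mu_i(t)\;\geq\;\hat\mu_{a^\star}(t)-\hat\mu_i(t)\;\geq\;\Delta_i-2\gamma_t\ ,
\end{align*}
so arm $i$ is eliminated as soon as $\gamma_t<\Delta_i/4$ and $N_i(t)\geq T(\alpha,\delta)$ both hold.

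Substituting the closed form of $\gamma_t$ reduces the first condition to an implicit inequality of the form $t\geq 32\sigma^2(1-\varepsilon)^{-2}\Delta_i^{-2}\log(K\pi^2 t^2/(12\delta))$, which a standard self-bounding argument inverts to $t_i=\mathcal{O}(\Delta_i^{-2}\log(K/(\delta\Delta_i)))$. Consequently $N_i(\tau)\leq\max\{T(\alpha,\delta),t_i\}$ for each $i\neq a^\star$, while $N_{a^\star}(\tau)\leq\max_{i\neq a^\star}N_i(\tau)$ because $a^\star$ is sampled only while some suboptimal arm remains active. Summing and using $a+b\leq 2\max\{a,b\}$ yields
\begin{align*}
\tau\;\leq\;KT(\alpha,\delta)+2\sum_{i\neq a^\star}t_i\;\leq\;2\max\Big\{KT(\alpha,\delta),\,2\sum_{i\neq a^\star}t_i\Big\}\ .
\end{align*}
Substituting $T(\alpha,\delta)=8\varepsilon^{-2}\log(1/\delta)$ and absorbing numerical constants into $\mathcal{O}(\cdot)$ recovers the claimed bound.

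The main obstacle is the inversion of the implicit inequality defining $t_i$, because the $\log(t^2)$ term on the right-hand side precludes a direct algebraic rearrangement and requires a careful fixed-point or iteration argument to express $t_i$ explicitly in terms of $\Delta_i$, $K$, and $\delta$ without spurious logarithmic factors. A secondary subtlety is bounding $N_{a^\star}(\tau)$, which could inflate the sample complexity since $a^\star$ remains active until the very last elimination; this is addressed by noting that $N_{a^\star}(\tau)$ is dominated by the largest suboptimal elimination time and therefore absorbed into the sum over $i\neq a^\star$.
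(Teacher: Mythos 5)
Your proposal is correct and follows essentially the same route as the paper's proof: the good event built from Lemma~\ref{theorem:est_con} via a union bound with $\delta_t\propto\delta/(Kt^2)$, survival of $a^\star$, the elimination condition $\Delta_i>4\gamma_t$ for each suboptimal arm, inversion of the resulting implicit inequality to $\mathcal{O}(\Delta_i^{-2}\log(K/(\delta\Delta_i)))$, and the final combination with the forced-exploration floor $KT(\alpha,\delta)$ via $\tau\leq 2\sum_{i\neq a^\star}N_i(\tau)$. Your treatment of the $N_{a^\star}(\tau)$ term and of the self-bounding inversion is, if anything, slightly more explicit than the paper's.
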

	\begin{proof}
		See Appendix~\ref{app:SC_SE}.
	\end{proof}
	Theorem \ref{theorem:SC_SE} shows that the sample complexity of Algorithm \ref{algorithm:2} is instance optimal up to logarithmic factors. This follows from the fact that $\sum_{i\in[K]\setminus a^\star}1/\Delta_i^2<H$.
	
	\section{Experiments}\label{sec:experiments}
	In this section, we evaluate the empirical performance of the G-CBAI and SE-CBAI algorithms and compare it to that of the existing algorithms. We provide synthetic and real-world experiments that consider Gaussian bandits as a common framework for comparison with the median-based successive elimination framework in~\cite{CBAI}, although our algorithms work for any general sub-Gaussian bandit environment too.
	%For all the experiments, the adversarial samples are drawn from uniform distributions that change over time such that the index of the true best arm does not change after contamination.
	
	\textbf{Experiments using synthetic data.} We consider a Gaussian bandit instance with $K=4$, and the true mean vector is $\bmu\triangleq[2.5,\;2.3,\;2,\;0.6]$. For comparison, we test four strategies: (i) gap-based algorithm (Algorithm \ref{algorithm:1}), (ii) successive elimination-based algorithm (Algorithm \ref{algorithm:2}), (iii) Algorithm \ref{algorithm:1}, along with a random sampling strategy that selects any arm $a\in[K]$ with a uniform probability, and (iv) the successive elimination-based algorithm in~\cite{CBAI}, which uses the empirical median as an estimator. Figure~\ref{fig:Exp1} depicts the average sample complexity versus varying confidence levels $\delta$, when the attack probability $\varepsilon$ is set to $\varepsilon=0.1$. We observe that in this experiment, the median-based successive elimination strategy~\cite{CBAI} has a slightly better performance compared to the proposed trimmed mean estimator with Algorithm~\ref{algorithm:2}. This improvement is a consequence of the fact that we are using a Gaussian bandit for which the empirical median yields a better error guarantee (of the order $\mcO(\frac{\varepsilon}{1-\varepsilon})$) compared to the trimmed mean ($\mcO(\varepsilon\sqrt{\log(1/\varepsilon)})$). However, our proposed algorithms work for any sub-Gaussian bandit instance, which is not the case for the median-based strategy. Figure~\ref{fig:Exp2} shows how the sample complexity scales with increasing levels of contamination $\varepsilon$.
	
	\begin{figure}[h]
    \centering
    \begin{minipage}{0.49\textwidth}
        \centering
        \includegraphics[width=0.99\textwidth]{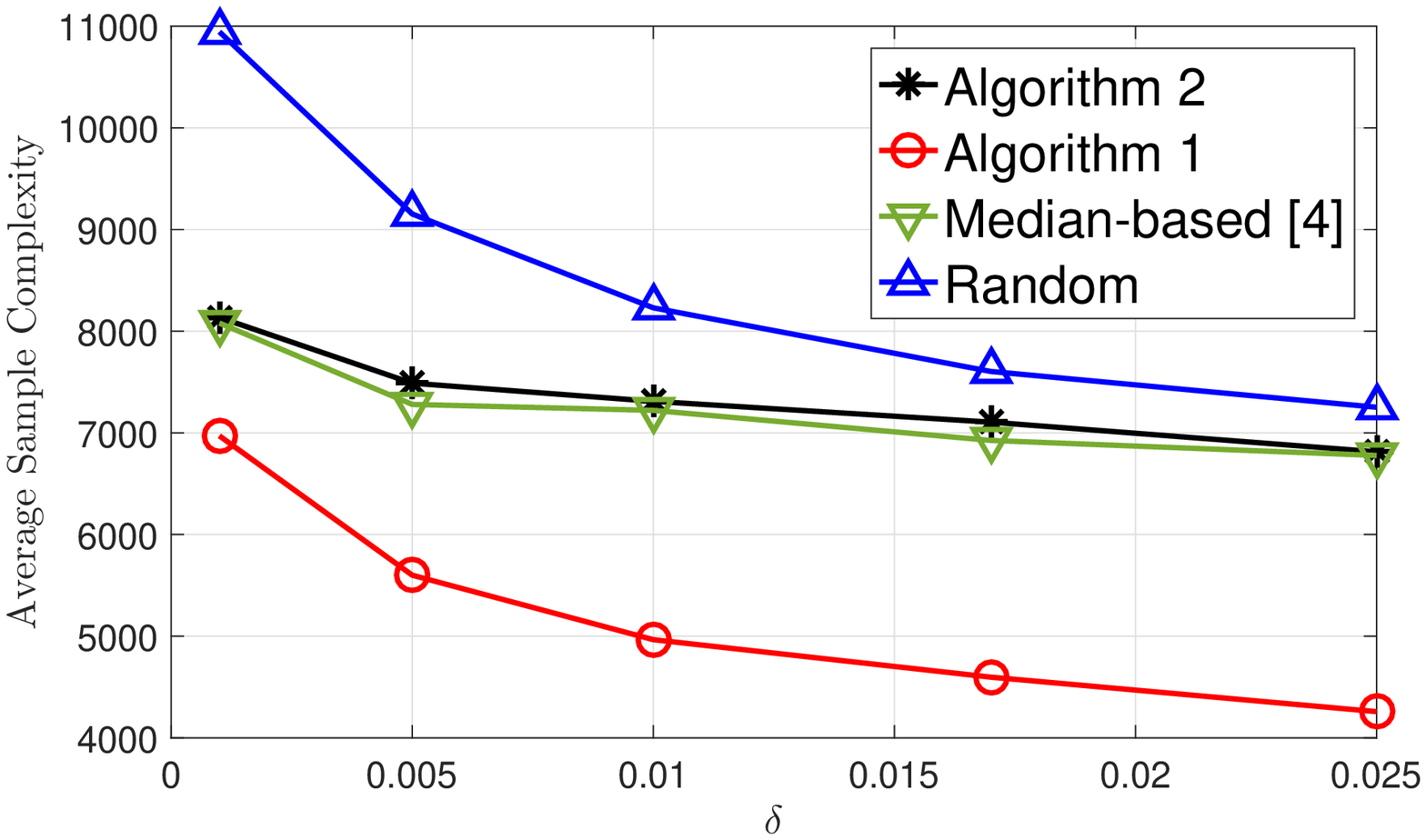} % first figure itself
        \caption{Synthetic data: $\E[\tau]$ versus $\delta$.}
        \label{fig:Exp1}
    \end{minipage}\hfill
    \begin{minipage}{0.49\textwidth}
        \centering
        \includegraphics[width=0.99\textwidth]{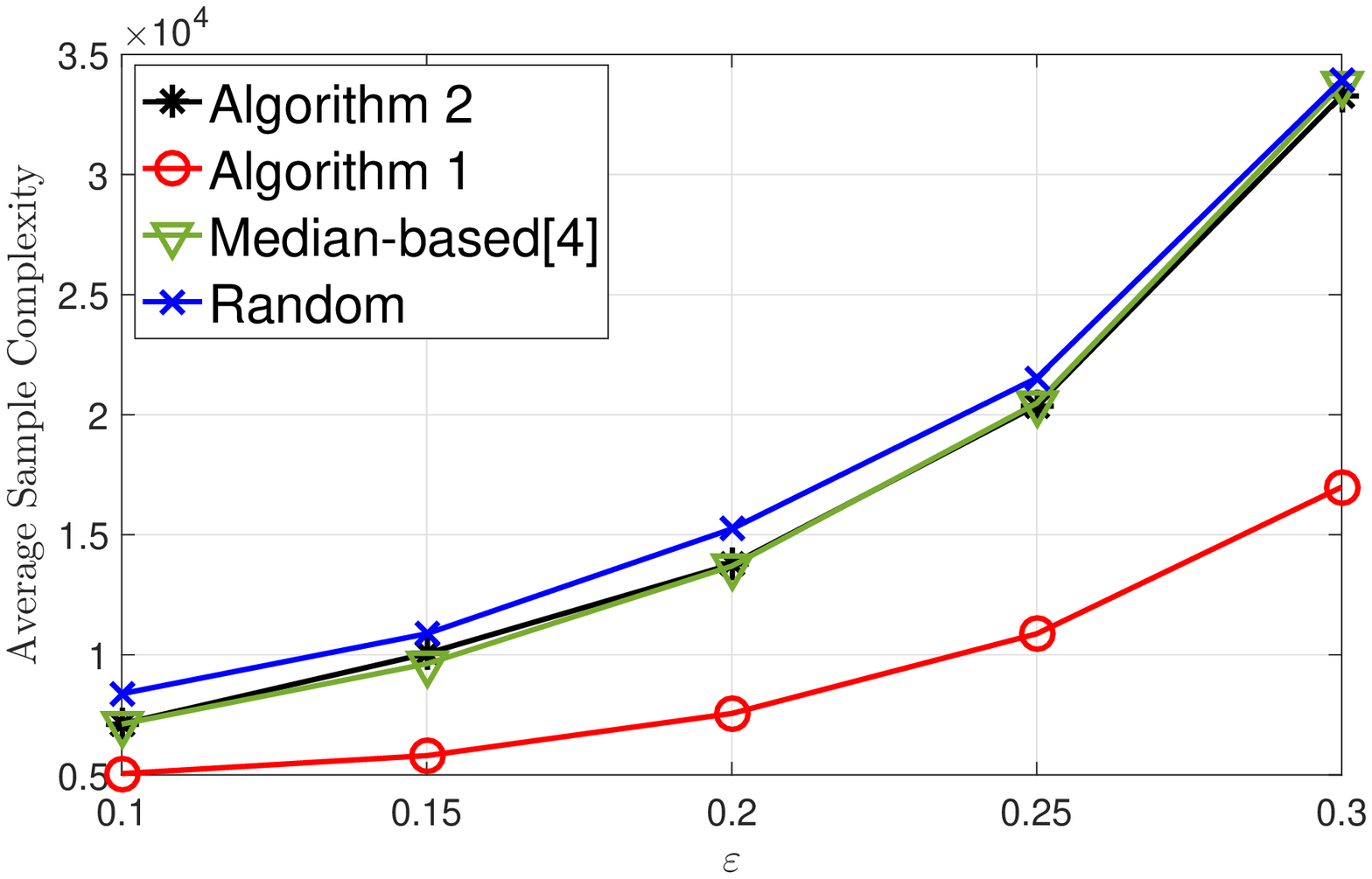} % second figure itself
        \caption{Synthetic data: $\E[\tau]$ versus $\varepsilon$.}
        \label{fig:Exp2}
    \end{minipage}
    \caption*{Algorithms compared: (i) gap-based algorithm (Algorithm \ref{algorithm:1}), (ii) successive elimination-based algorithm (Algorithm \ref{algorithm:2}), (iii) median-based successive elimination proposed in~\cite{CBAI}, and (iv) random arm selection with the stopping rule of Algorithm \ref{algorithm:1}}
    \end{figure}
	
	\noindent Our observation indicates that even though not supported by theory, for the proposed gap-based strategy, a confidence interval of the form $\beta_a(t,\delta)=\sigma\sqrt{2/N_a(t)\log(\log t/\delta)}$ for every arm $a\in[K]$ is over-conservative and meets the prescribed guarantee on decision confidence. The looseness in the theoretical confidence interval has also been observed in the algorithm in~\cite{pmlr-v49-garivier16a} for BAI in stochastic multi-armed bandits, where the empirical performances are compared using a tighter exploration threshold (of the order of $\log((\log t)/\delta)$). 
	%Our simulation results are also with respect to the tighter choice of confidence intervals.
	
	\textbf{Experiments using real data.} We use two real-world datasets, namely the New Yorker Caption Contest (NYCC) dataset and the PKIS2 dataset for comparing our algorithms to the existing ones. The NYCC dataset is a collection of cartoons and captions fitting the cartoons, along with user ratings as to ``how funny'' the cartoons (along with the corresponding captions) are. In our experiment, we aim to find the funniest cartoon among a given subset of them, while observing noisy user ratings for each of them. On the other hand, the PKIS2 dataset is a collection of protein kinase, and several kinase inhibitors. The aim of this experiment is to find the most effective inhibitor against a targeted kinase. Other details about these datasets and the experiments are presented in Appendix~\ref{app:sim}. For both the datasets, we plot the sample complexity versus varying levels of decision confidence $\delta$ when the probability of attack is set to $\varepsilon=0.1$. These results are depicted in Figure~\ref{fig:Exp3} and Figure~\ref{fig:Exp4} . 
	
	Our experiments show that the G-CBAI algorithm outperforms all existing algorithms for both the real-world and synthetic datasets owing to the tightness in the confidence intervals (and hence, the stopping criterion). Furthermore, it is noteworthy that as we increase $\delta$, the gap between the empirical performance of the random selection strategy and the successive elimination based strategy reduces. This is due to the fact that the stopping criterion for the randomized selection procedure is the same as that of the gap-based algorithm, and the improvement in performance of the randomized procedure is due to the tightness of the confidence intervals used in Algorithm~\ref{algorithm:1}. We also note that using the confidence interval defined in (\ref{eq:conf_algo1}) for the G-CBAI algorithm, the SE-CBAI algorithm outperforms the gap-based method. More details are provided in Appendix~\ref{app:sim}. Specifically, we provide more experiments comparing the trimmed mean estimator to other estimators (see Figures \ref{fig:Exp7}, \ref{fig:Exp8} and \ref{fig:Exp9}) in order to establish the superior performance of the trimmed mean estimator in various settings. 
    
    %\PY{Since we have space, we should also summarize the results of the new experiments presented in Appendix here. \textcolor{green}{not sure about space after the adjustments}}
    %\PY{If not enough space, just add a sentence on what additional experiments we did (and link to those sections)}
    
    \begin{figure}[h]
		\begin{subfigure}{.5\textwidth}
			\centering % centering figure
			%\scalebox{0.5} % rescale the figure by a factor of 0.8
			\includegraphics[width=0.99\textwidth]{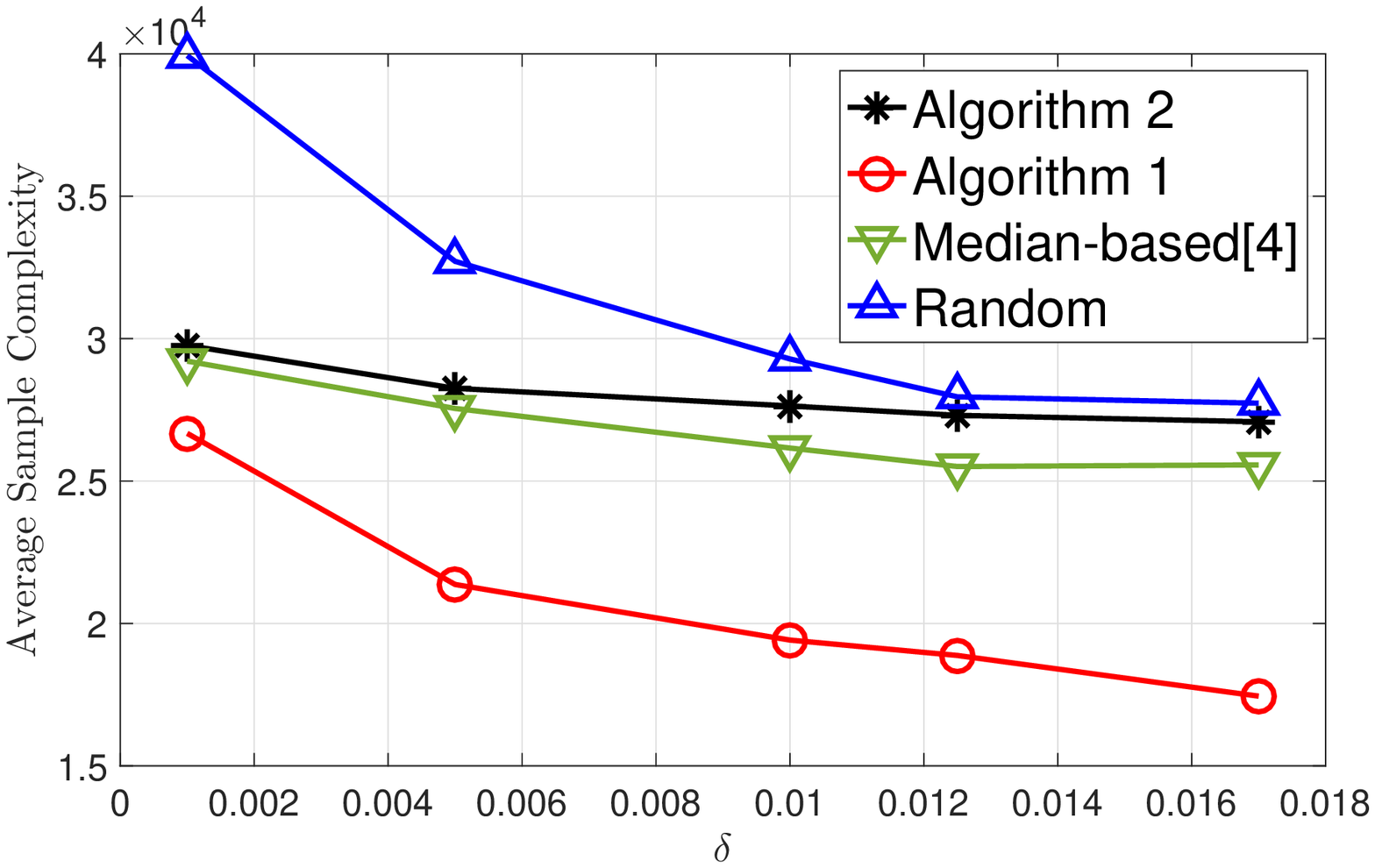} % importing figure
			\caption{NYCC}
			\label{fig:Exp3} % labeling to refer it inside the text
		\end{subfigure}
		\begin{subfigure}{.5\textwidth}
			\centering % centering figure
			%\scalebox{0.5} % rescale the figure by a factor of 0.8
			\includegraphics[width=0.99\textwidth]{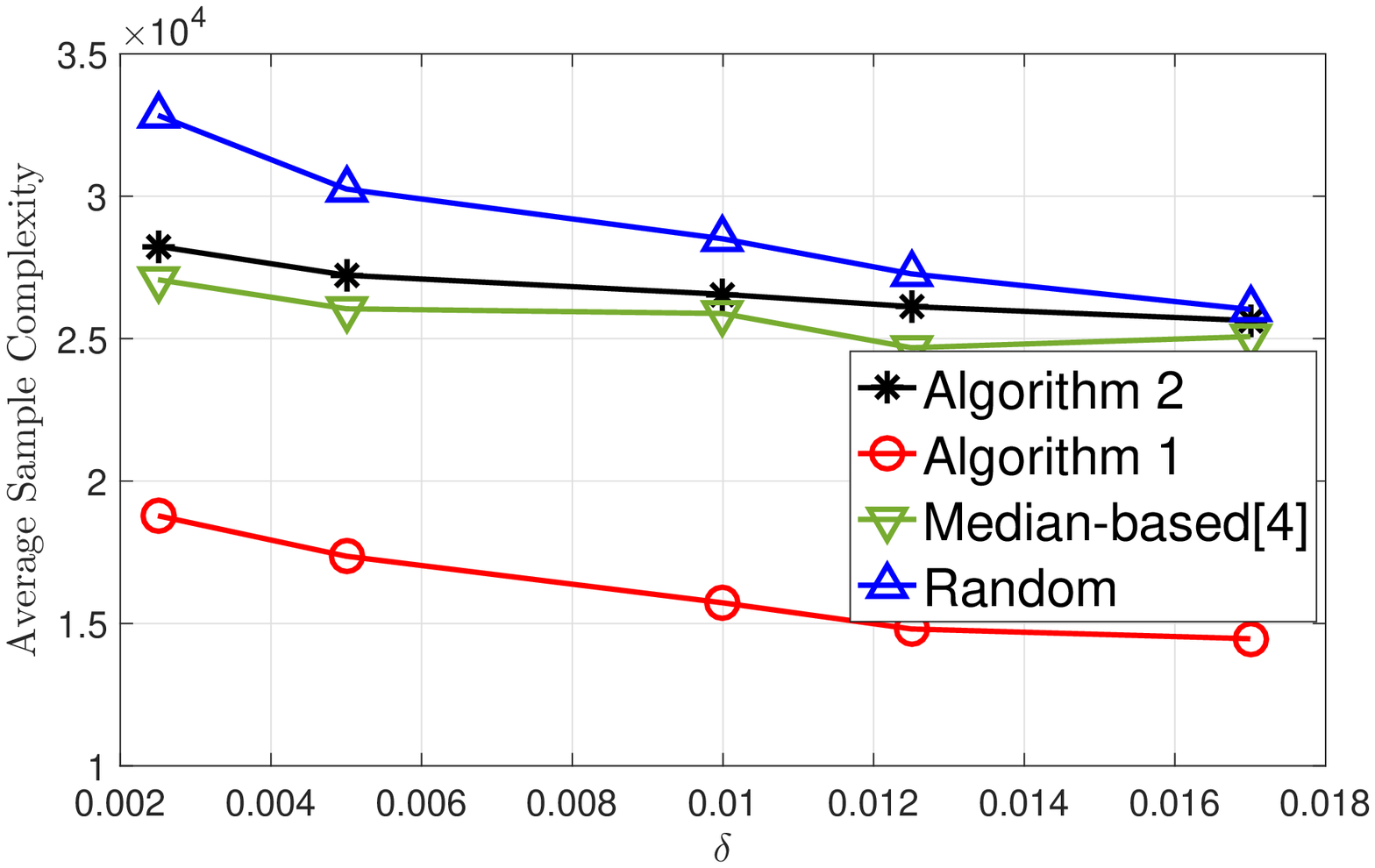} % importing figure
			\caption{PKIS2}
			\label{fig:Exp4} % labeling to refer it inside the text
		\end{subfigure}
	\caption{Experiments with \textbf{real data}: $\E[\tau]$ versus $\delta$ plotted for (i) G-CBAI algorithm (Algorithm \ref{algorithm:1}), (ii) SE-CBAI algorithm (Algorithm \ref{algorithm:2}), (iii) median-based successive elimination proposed in~\cite{CBAI}, and (iv) random arm selection with stopping rule of Algorithm \ref{algorithm:1}.}
	\end{figure}

%	In this section, we compare our proposed procedure to the existing strategies that solve the LMAB problem. Our experimental setup is similar to that of \cite{RAGE} and \cite{LinGapE}. It is noteworthy that in each of the results provided, the $\mathcal{XY}-Oracle$ strategy serves as a reference since the algorithm assumes the prior knowledge of the ground truth $\boldsymbol\theta^*$.  

	% 	\subsection{Experiments with Synthetic Data}

	\section{Conclusions}\label{sec:conclusions}
	In this paper, we have investigated the problem of best arm identification for stochastic multi-armed bandits under adversarial corruptions. Specifically, we have assumed that with a probability $\varepsilon$, each reward sample is replaced by a sample drawn from an adversarial distribution. Under this framework, we have proposed two algorithms for best arm identification and have analyzed their optimality properties in terms of the average sample complexity. Specifically, a gap-based algorithm has been shown to be asymptotically optimal up to constant factors, while a successive elimination-based algorithm is shown to be optimal up to logarithmic factors. Finally, we have performed experiments with synthetic as well as real-world data to compare the empirical performance of the proposed algorithms to existing ones. The experiments have shown the superior empirical performance of the G-CBAI algorithm in both synthetic as well as real world settings, while the SE-CBAI algorithm is seen to exhibit superior performance with the confidence intervals supported by theory. Our experiments have also shown the advantage of using the trimmed mean estimator over other popular estimators in different bandit environments. One limitation of this investigation is that we have considered an oblivious adversary, while in practice, adversarial models could be more powerful. Analyzing contaminated best arm identification for stronger adversaries (e.g., prescient and malicious adversaries) for the class of sub-Gaussian bandits is open for further investigation. 
	
	\begin{ack}
	This work was supported by the Rensselaer-IBM AI Research Collaboration (http://airc.rpi.edu), part of the IBM AI Horizons Network (http://ibm.biz/AIHorizons)
	\end{ack}
	
%	In this paper, we have studied the problem of best arm identification (pure exploration) in linear bandits. We have introduced a new framework for explicitly incorporating the cost of sampling into the design of the algorithm and the decision rules. Based on that, we have designed a cost-aware sampling strategy. We have analytically established its optimality with respect to the proposed framework and have evaluated its sample complexity. Furthermore, we have provided experiments on the synthesized and real datasets to showcase the performance advantages of the proposed algorithm.
	
	\newpage
	
%	\section*{Broader Impact}
%	{\bf Theoretical aspect:} Multi-arm bandit problems have a strong synergy with sequential statistical analysis. While these two branches of sequential decision-making during the pioneering studies of Thompson (bandit) Chernoff (sequential experimental design) influenced one another considerably, over the more recent times, they are investigated in isolation. In a broader perspective, this paper is attempting to bring ideas from sequential statistical analysis to the bandit problems to showcase a connection between the two domains. The same ideas can be further expanded into other problems in bandit, especially in the Bayesian settings, in which sequential statistics have advanced more.
%	
%	{\bf Practical impact:} With the growing global trend of automated processing and manufacturing, there is a growing need for advances in the experimental designs, especially when they pertain to data-driven and data-adaptive approaches. As a result, the sequential design of experiments, an instance of which is linear bandit problems and best arm identification objective, will have instrumental roles in the automated and data-adaptive design of experiments. 
%	
	%\bibliographystyle{abbrvnat}
	\bibliographystyle{unsrtnat}
	\bibliography{BanditRef}

\begin{thebibliography}{29}
\providecommand{\natexlab}[1]{#1}
\providecommand{\url}[1]{\texttt{#1}}
\expandafter\ifx\csname urlstyle\endcsname\relax
  \providecommand{\doi}[1]{doi: #1}\else
  \providecommand{\doi}{doi: \begingroup \urlstyle{rm}\Url}\fi

\bibitem[Keogh-Brown et~al.(2007)Keogh-Brown, Bachmann, Shepstone, Hewitt,
  Howe, Ramsay, Song, Miles, Torgerson, Miles, Elbourne, Harvey, and
  Campbell]{clinical}
M.~R. Keogh-Brown, M.~O. Bachmann, L.~Shepstone, C.~Hewitt, A.~Howe, C.~R.
  Ramsay, F.~Song, J.~N. Miles, D.~J. Torgerson, S.~Miles, D.~Elbourne,
  I.~Harvey, and M.~J. Campbell.
\newblock Contamination in trials of educational interventions.
\newblock \emph{Health Technol Assess}, 11(43), October 2007.

\bibitem[Bubeck et~al.(2009)Bubeck, Munos, and Stoltz]{Bubeck}
S.~Bubeck, R.~Munos, and G.~Stoltz.
\newblock Pure exploration in multi-armed bandits problems.
\newblock In \emph{Proc. International Conference on Algorithmic Learning
  Theory}, Porto, Portugal, October 2009.

\bibitem[Paulson(1964)]{paulson}
E.~Paulson.
\newblock A sequential procedure for selecting the population with the largest
  mean from $k$ normal populations.
\newblock \emph{Annals of Mathematical Statistics}, 35\penalty0 (1):\penalty0
  174--180, March 1964.

\bibitem[Altschuler et~al.(2019)Altschuler, Brunel, and Malek]{CBAI}
J.~Altschuler, V.~E. Brunel, and A.~Malek.
\newblock Best arm identification for contaminated bandits.
\newblock \emph{Journal of Machine Learning Research}, 20\penalty0
  (91):\penalty0 1--39, 2019.

\bibitem[Lai and Robbins(1985)]{LaiRobins}
T.~L. Lai and H.~Robbins.
\newblock Asymptotically efficient adaptive allocation rules.
\newblock \emph{Advanced Applied Mathematics}, 6\penalty0 (1):\penalty0 4–22,
  March 1985.

\bibitem[Kalyanakrishnan et~al.(2012)Kalyanakrishnan, Tewari, Auer, and
  Stone]{Kalyanakrishnan2012}
S.~Kalyanakrishnan, A.~Tewari, P.~Auer, and P.~Stone.
\newblock {PAC} subset selection in stochastic multi-armed bandits.
\newblock In \emph{Proc. International Conference on Machine Learning},
  Madison, WI, June 2012.

\bibitem[Gabillon et~al.(2012)Gabillon, Ghavamzadeh, and Lazaric]{Gabillon}
V.~Gabillon, M.~Ghavamzadeh, and A.~Lazaric.
\newblock Best arm identification: A unified approach to fixed budget and fixed
  confidence.
\newblock In \emph{Proc. Advances in Neural Information Processing Systems},
  Lake Tahoe, NV, December 2012.

\bibitem[Kaufmann and Kalyanakrishnan(2013)]{Kalyanakrishnan2013}
E.~Kaufmann and S.~Kalyanakrishnan.
\newblock Information complexity in bandit subset selection.
\newblock \emph{Journal of Machine Learning Research}, 30:\penalty0 228--251,
  January 2013.

\bibitem[Jamieson et~al.(2014)Jamieson, Malloy, Nowak, and
  Bubeck]{Jamieson2014}
K.~Jamieson, M.~Malloy, R.~Nowak, and S.~Bubeck.
\newblock lil' ucb : An optimal exploration algorithm for multi-armed bandits.
\newblock In \emph{Proc. Conference on Learning Theory}, June 2014.

\bibitem[Maron and Moore(1997)]{Maron}
O.~Maron and A.~W. Moore.
\newblock The racing algorithm: Model selection for lazy learners.
\newblock \emph{Artificial Intelligence Review}, 11\penalty0 (1--5):\penalty0
  193--225, February 1997.

\bibitem[Even-Dar et~al.(2006{\natexlab{a}})Even-Dar, Mannor, and
  Mansour]{Even-Dar}
E.~Even-Dar, S.~Mannor, and Y.~Mansour.
\newblock Action elimination and stopping conditions for the multi-armed bandit
  and reinforcement learning problems.
\newblock \emph{Journal of Machine Learning Research}, 7:\penalty0 1079--1105,
  December 2006{\natexlab{a}}.

\bibitem[Audibert et~al.(2010)Audibert, Bubeck, and Munos]{Audibert}
J.~Y. Audibert, S.~Bubeck, and R.~Munos.
\newblock Best arm identification in multi-armed bandits.
\newblock In \emph{Proc. Conference on Learning Theory}, Haifa, Israel,
  November 2010.

\bibitem[Chen et~al.(2017)Chen, Li, and Qiao]{Chen2017b}
L.~Chen, J.~Li, and M.~Qiao.
\newblock Towards instance optimal bounds for best arm identification.
\newblock In \emph{Proc. Conference on Learning Theory}, Amsterdam,
  Netherlands, July 2017.

\bibitem[Lykouris et~al.(2018)Lykouris, Mirrokni, and P.~Leme]{Lykouris}
T.~Lykouris, V.~Mirrokni, and R.~P.~Leme.
\newblock Stochastic bandits robust to adversarial corruptions.
\newblock In \emph{Proc. ACM SIGACT Symposium on Theory of Computing}, Los
  Angeles, CA, 2018.

\bibitem[Gupta et~al.(2019)Gupta, Koren, and Talwar]{gupta19a}
A.~Gupta, T.~Koren, and K.~Talwar.
\newblock Better algorithms for stochastic bandits with adversarial
  corruptions.
\newblock In \emph{Proc. Conference on Learning Theory}, Phoenix, AZ, June
  2019.

\bibitem[Zimmert and Seldin(2019)]{zimmert19a}
J.~Zimmert and Y.~Seldin.
\newblock An optimal algorithm for stochastic and adversarial bandits.
\newblock In \emph{Proc. International Conference on Artificial Intelligence
  and Statistics}, Okinawa, Japan, April 2019.

\bibitem[Krishnamurthy et~al.(2021)Krishnamurthy, Lykouris, Podimata, and
  Schapire]{regretadv1}
A.~Krishnamurthy, T.~Lykouris, C.~Podimata, and R.~Schapire.
\newblock Contextual search in the presence of irrational agents.
\newblock In \emph{Proc. Annual ACM SIGACT Symposium on Theory of Computing},
  New York, NY, 2021.

\bibitem[Krishnamurthy et~al.(2020)Krishnamurthy, Lykouris, Podimata, and
  Schapire]{regretadv2}
A.~Krishnamurthy, T.~Lykouris, C.~Podimata, and R.~Schapire.
\newblock Contextual search in the presence of irrational agents.
\newblock \emph{arXiv 2002.11650}, 2020.

\bibitem[Niss and Tewari(2020)]{niss20a}
Laura Niss and Ambuj Tewari.
\newblock What you see may not be what you get: {UCB} bandit algorithms robust
  to $\varepsilon$-contamination.
\newblock In \emph{Proc. Conference on Uncertainty in Artificial Intelligence},
  Toronto, Canada, August 2020.

\bibitem[Rangi et~al.(2021)Rangi, Tran-Thanh, Xu, and
  Franceschetti]{rangi2021secureucb}
A.~Rangi, L.~Tran-Thanh, H.~Xu, and M.~Franceschetti.
\newblock Secure-{UCB}: Saving stochastic bandits from poisoning attacks via
  limited data verification.
\newblock \emph{arXiv 2102.07711}, 2021.

\bibitem[Zhong et~al.(2021)Zhong, Cheung, and Tan]{zhong2021probabilistic}
Z.~Zhong, W.~C. Cheung, and V.~Y.~F. Tan.
\newblock Probabilistic sequential shrinking: A best arm identification
  algorithm for stochastic bandits with corruptions.
\newblock \emph{arXiv 2010.07904}, 2021.

\bibitem[Huber(1964)]{Huber}
P.~J. Huber.
\newblock {Robust Estimation of a Location Parameter}.
\newblock \emph{The Annals of Mathematical Statistics}, 35\penalty0
  (1):\penalty0 73 -- 101, 1964.

\bibitem[Diakonikolas and Kane(2019)]{diakonikolas2019recent}
I.~Diakonikolas and D.~M. Kane.
\newblock Recent advances in algorithmic high-dimensional robust statistics.
\newblock \emph{arXiv 1911.05911}, 2019.

\bibitem[Even-Dar et~al.(2006{\natexlab{b}})Even-Dar, Mannor, and
  Mansour]{JMLR:v7:evendar06a}
E.~Even-Dar, S.~Mannor, and Y.~Mansour.
\newblock Action elimination and stopping conditions for the multi-armed bandit
  and reinforcement learning problems.
\newblock \emph{Journal of Machine Learning Research}, 7\penalty0
  (39):\penalty0 1079--1105, 2006{\natexlab{b}}.

\bibitem[Xu et~al.(2018)Xu, Honda, and Sugiyama]{LinGapE}
L.~Xu, J.~Honda, and M.~Sugiyama.
\newblock A fully adaptive algorithm for pure exploration in linear bandits.
\newblock In \emph{Proc. International Conference on Artificial Intelligence
  and Statistics}, Lanzarote, Canary Islands, April 2018.

\bibitem[Garivier and Kaufmann(2016)]{pmlr-v49-garivier16a}
A.~Garivier and E.~Kaufmann.
\newblock Optimal best arm identification with fixed confidence.
\newblock In \emph{Proc. Conference on Learning Theory}, New York, NY, June
  2016.

\bibitem[Steinhardt(2021)]{resilience}
J.~Steinhardt.
\newblock {L}ecture {N}otes for {STAT240} ({R}obust and {N}onparametric
  {S}tatistics).
\newblock University of California, Barkeley, April 2021.

\bibitem[Drewry et~al.(2017)Drewry, Wells, Andrews, A., Al-Ali, Axtman,
  Capuzzi, Elkins, Ettmayer, and Frederiksen]{pkis2}
D.~H. Drewry, C.~I. Wells, D.~M. Andrews, R.~A., H.~Al-Ali, A.~D. Axtman, S.~J.
  Capuzzi, J.~M. Elkins, P.~Ettmayer, and M.~Frederiksen.
\newblock Progress towards a public chemogenomic set for protein kinases and a
  call for contributions.
\newblock \emph{PloS one}, 12(8), August 2017.

\bibitem[Mason et~al.(2020)Mason, Jain, Tripathy, and Nowak]{mason2020finding}
B.~Mason, L.~Jain, A.~Tripathy, and R.~Nowak.
\newblock Finding all {\textdollar}{\textbackslash}epsilon{\textdollar}-good
  arms in stochastic bandits.
\newblock In \emph{Proc. Advances in Neural Information Processing Systems},
  Vancouver, Canada, December 2020.

\end{thebibliography}
	
	\newpage

	\newpage
	\appendix
	
	\section{Proof of Theorem~\ref{theorem:LB}}
	\label{app:LB}
	Consider a BAI instance $\tilde\nu$ with distributions and corresponding mean values $\{{\sf P}_i : i\in[K]\}$ and $\{\lambda_i : i\in[K]\}$, respectively, for the arms of this BAI instance.
	%$\{{\sf P}_i\}_{i\in[K]}$ and $\lambda_i$ denoting the distributions and means corresponding to the arms of the BAI instance, 
	Given the adversarial model, we have the following two properties:
	\begin{itemize}
		\item[i.] There exist some adversarial distributions $\{\mathbb{Q}_i : i\in[K]\}$, such that we have
		\begin{align}
			{\sf P}_i\;=\;(1-\varepsilon)\P_i + \varepsilon\mathbb{Q}_i\quad\forall\; i\in[K]\ .
		\end{align}
		\item[ii.] The suboptimality gap of each arm in the BAI instance is equal to the suboptimality gap corresponding to the CBAI instance, i.e.,\begin{align}
			\lambda_{a^\star}-\lambda_i \; = \; (\mu_{a^\star}-U_{a^\star}) - (\mu_i+U_i)\ ,\quad\forall\; i\in[K]\ .
		\end{align}
		Such a choice of suboptimality gap is obtained by choosing $\lambda_{a^\star}=\mu_{a^\star}-U_{a^\star}$ and $\lambda_i=\mu_i+U_i$ for every $i\in[K]\setminus a^\star$.
	\end{itemize}
	We follow the same line of analysis as in ~\cite[ Theorem 18]{CBAI}, which shows that any algorithm that is $\delta$-PAC for a PIBAI instance is also $\delta$-PAC for the counterpart BAI instance. This holds because (i)~the samples for the BAI instance are drawn according to the same law as that of the PIBAI instance, and (ii)~ the suboptimality gaps in the PIBAI instance are the same as those of the BAI instance. Thus, any algorithm operating on the BAI instance requires at least as many samples before stopping as that required by the PIBAI instance. Thus, it is sufficient to find a lower bound on the BAI instance. In order to do so, we invoke~\cite[Lemma 2]{pmlr-v49-garivier16a}, which proves that
	\begin{align}
		\lim\limits_{\delta\rightarrow 0}\;\frac{\E_{\tilde\nu}[\tau]}{\log(1/\delta)}\;\geq\; T^\star(\lambda)\ ,
	\end{align}
	where 
	\begin{align}
		[T^\star(\lambda)]^{-1}\;\triangleq\; \sup\limits_{w\in\mcQ^K}\inf\limits_{\zeta\in{\sf Alt(\lambda)}}\sum\limits_{i\in[K]}w_a D_{{\sf KL}}(\lambda_i,\zeta_i)\ ,
	\end{align}
	where $\mcQ^K$ denotes the $K$-dimensional probability simplex, and $\text{Alt}(\lambda)$ denotes the set of bandit instances for which $a^\star$ is not the best arm, i.e., ${\sf Alt}(\lambda)\triangleq \{\nu\in{\sf SG}_K(\sigma) : a^\star(\nu)\cap a^\star(\lambda) = \emptyset\}$, where $a^\star(\nu)$ denotes the best arm of the bandit instance $\nu$. Furthermore, restricting the class of bandits to Gaussian bandits, it is shown in ~\cite{pmlr-v49-garivier16a} that we can obtain the following lower bound on $T^\star(\lambda)$
	\begin{align}
		T^\star(\lambda)\;\geq\;\sum\limits_{a\in[K]}\Bigg(\frac{\sqrt{2}\sigma}{\max\{(\lambda_{a^\star}-\lambda_{b^\star}),(\lambda_{a^\star}-\lambda_a)\}}\Bigg )^2\ ,
	\end{align}
	which proves the desired result by noting that based on property~(ii), for every $i\in[K]$ we have
	\begin{align}
		\lambda_{a^\star}-\lambda_i\;=\;\Delta_i\ .
	\end{align}

	\section{Proof of Lemma~\ref{theorem:est_con}}
	\label{app:est_con}
	
	First, let us denote the set of uncontaminated rewards (drawn from the true measure $\mathbb{F}$) obtained up to time $t$ by $\mcG_t$. Accordingly, denote the set of contaminated rewards drawn from the adversarial models up to time $t$ by $\mcC_t$. Thus, we have that 
	\begin{align}
		\bR^t\;=\; \mcG_t \cup \mcC_t \ ,
	\end{align}	
	where $\bR^t$ is the set of all rewards obtained from the source. Furthermore, let us denote the set of removed rewards for estimation by $\mcR_t$, and the set of remaining rewards for estimation by $\mcA_t$.	Now, we construct an interval around the true mean $\mu$, denoted by
	\begin{align}
		E\;\triangleq\; \Bigg[\mu-\sigma\sqrt{2\log\frac{2}{\alpha}}, \;\mu + \sigma\sqrt{2\log\frac{2}{\alpha}}\Bigg]\ .
	\end{align}
	For any random variable $X$ generated according to the true distribution $\mathbb{F}$, we have  
	\begin{align}
		\P(X\not\in E)\;&=\; \P\bigg\{\lvert X-\mu\rvert>\sigma\sqrt{2\log\frac{2}{\alpha}}\bigg\}\\
		&\leq\; \frac{\alpha}{2}\ ,
		\label{eq: con1}
	\end{align}
	where the inequality follows from $X$ being $\sigma$-sub-Gaussian. 
	Furthermore, define the sequence of rewards by $\{X_t : t\in\N\}$, and corresponding to the reward at time $t$, i.e. $X_t$, define the random variable $Y_i\;\triangleq\; \mathds{1}_{\{X_t\not\in E,X_t\sim\mathbb{F}\}}$. Then, we have 
	\begin{align}
		\P\bigg\{\sum\limits_{s=1}^t Y_s > \alpha t\bigg\} &=\;  \P\bigg\{\sum\limits_{s=1}^t Y_s - t\E[Y_s] > \alpha t - t\E[Y_s]\bigg\}\\
		&\leq\; \P\bigg\{\sum\limits_{s=1}^t Y_s - t\E[Y_s] > \frac{\alpha t}{2}\bigg\}\\
		&\leq \exp\bigg(-\frac{t\alpha^2}{2}\bigg) \ ,
	\end{align}
	where the first inequality follows from (\ref{eq: con1}), and the second inequality is a result of applying the Hoeffding's inequality.
	Furthermore, note that from resilience $\rho$\footnote{A distribution $\P$ over $\R^d$ is called $(\rho,\alpha)$-resilient (with respect to some norm $\lVert\cdot\rVert$) if $\lVert \E_{X\sim\P}[X\med E]-\E_{X\sim\P}[X]\rVert\leq\rho$ for all events $E$ with $\P(E)\geq 1-\alpha$.} for $\sigma$-sub-Gaussian distributions~\cite{resilience}, we have that
	\begin{align}
	\label{eq:con_res}
		\bigg\lvert \E[X|\mcG_t\cap E]-\mu\bigg\rvert\;\leq\;\rho\ ,\quad\text{where}\quad\rho\;=\;\mcO\Bigg(\sigma\alpha\sqrt{\log\frac{1}{\alpha}}\Bigg)\ .
	\end{align}
	Additionally, we know that due to the $\sigma$-sub-Gaussian assumption, for any set $\mcA$ of samples from distribution $\mathbb{F}$,
	\begin{align}
	\label{eq:con_subg}
		\P\Bigg\{\bigg\lvert \frac{1}{|\mcA|}\sum\limits_{x\in\mcA}x -\mu\bigg\rvert>\sigma\sqrt{\frac{2}{|\mcA|}\log\frac{1}{\delta}}\Bigg\}\;\leq\;\delta\ .
	\end{align}
	Thus, combining (\ref{eq:con_res}) and (\ref{eq:con_subg}), we obtain
	\begin{align}
	\label{eq:lemma_inter5}
		\P\Bigg\{\bigg\lvert \frac{1}{|\mcG_t\cap E|}\sum\limits_{x\in\mcG_t\cap E}x -\mu\bigg\rvert>\mcO\Bigg(\sigma\alpha\sqrt{\log\frac{1}{\alpha}}\Bigg)+\sigma\sqrt{\frac{2}{|\mcG_t\cap E|}\log\frac{1}{\delta}}\Bigg\}\;\leq\;\delta\ .
	\end{align}
	Furthermore, since $|E\cap\mcG_t|\leq t$, from (\ref{eq:lemma_inter5}) we obtain
	\begin{align}
	\label{eq:con2}
	\P\Bigg[\bigg\lvert \sum\limits_{x\in\mcG_t\cap E}(x -\mu)\bigg\rvert>t\Bigg\{\mcO\Bigg(\sigma\alpha\sqrt{\log\frac{1}{\alpha}}\Bigg)+\sigma\sqrt{\frac{2}{t}\log\frac{1}{\delta}}\Bigg\}\Bigg]\;\leq\;\delta\ .
	\end{align}	
	Now, let us define the event
	\begin{align}
	\mcS_t\;\triangleq\; \bigg\{\sum\limits_{s=1}^t Y_s > \alpha t\bigg\}\ .
	\end{align}
	It can be readily verified that, conditioned on $\bar\mcS_t$, defined as the complement of $\mcS_t$, all the elements of $\mcA_t$ fall in the interval $E$. This is because of the fact that under the event $\bar\mcS_t$, the number of points drawn from the true distribution $\mathbb{F}$ until time $t$ that belong to the interval $E$ is at least $t(1-\alpha)$, whereas for constructing $\mcA_t$, we trim $2\alpha t$ points from the sequence of samples obtained up to $t$. Furthermore, for every $t>T(\alpha,\delta)$, we have $\P(\mcS_t)<\delta$, where we have defined 
	\begin{align}
		T(\alpha,\delta)\;\triangleq\; \frac{2}{\alpha^2}\log\frac{1}{\delta}\ .
	\end{align}  
	Thus, for all $t>T(\alpha,\delta)$, we have
	\begin{flalign}
	\label{eq:con3}
		\P\Bigg\{&\Bigg\lvert  \sum  \limits_{x\in\mcA_t}(x-\mu)  \Bigg\rvert  \;>\; \Bigg\lvert \sum\limits_{x\in\mcA_t\cap E\cap\mcG_t}(x-\mu) \Bigg\rvert + \Bigg\lvert \sum\limits_{x\in\mcA_t\cap E\cap\mcC_t}(x-\mu) \Bigg\rvert\Bigg\}\nonumber\\
		=\; &\P\Bigg\{\bigg\lvert \sum\limits_{x\in\mcA_t}(x-\mu) \bigg\rvert\;>\; \bigg\lvert \sum\limits_{x\in\mcA_t\cap E\cap\mcG_t}(x-\mu) \bigg\rvert + \bigg\lvert \sum\limits_{x\in\mcA_t\cap E\cap\mcC_t}(x-\mu) \bigg\rvert\;\Bigg\lvert \;\mcS_t\Bigg\}\times\P(\mcS_t)\nonumber\\
		\;&\;\; +\P\Bigg\{\bigg\lvert \sum\limits_{x\in\mcA_t}(x-\mu) \bigg\rvert\;>\; \bigg\lvert \sum\limits_{x\in\mcA_t\cap E\cap\mcG_t}(x-\mu) \bigg\rvert + \bigg\lvert \sum\limits_{x\in\mcA_t\cap E\cap\mcC_t}(x-\mu) \bigg\rvert\;\Bigg\lvert \;\bar\mcS_t\Bigg\}\times \P(\bar\mcS_t)\\
		<\;&\delta\ ,
		\label{eq:con}
	\end{flalign}
	where the last inequality is a consequence of the fact that $\P(\mcS_t)<\delta$.
	Note that
	\begin{align}
	\label{eq:con4}
		\Bigg\lvert \sum\limits_{x\in\mcA_t\cap E\cap\mcG_t}(x-\mu) \Bigg\rvert\;\leq\; \Bigg\lvert \sum\limits_{x\in E\cap\mcG_t}(x-\mu) \Bigg\rvert + \Bigg\lvert \sum\limits_{x\in\mcR_t\cap E\cap\mcG_t}(x-\mu) \Bigg\rvert \ .
	\end{align}
	Using (\ref{eq:con2}), in conjunction with the first term on the right hand side of (\ref{eq:con4}) we have
	\begin{align}
	\label{eq:con5}
		\P\Bigg[\Bigg\lvert \sum\limits_{x\in E\cap\mcG_t}(x-\mu) \Bigg\rvert\leq t\Bigg\lbrace\mcO\Bigg(\sigma\alpha\sqrt{\log\frac{1}{\alpha}}\Bigg)+\sigma\sqrt{\frac{2}{t}\log\frac{1}{\delta}}\Bigg\rbrace\Bigg]\;\geq\;1-\delta\ .
	\end{align}
	Furthermore, we have 
	\begin{align}
	\label{eq:con6}
		\Bigg\lvert \sum\limits_{x\in\mcR_t\cap E\cap\mcG_t}(x-\mu) \Bigg\rvert\;\leq\; \sigma|E\cap\mcG_t\cap \mcR_t|\sqrt{2\log\frac{2}{\alpha}}\;\leq\; t\sigma\alpha\sqrt{2\log\frac{2}{\alpha}}\ , 
	\end{align}
	based on the definition of interval $E$, and
	\begin{align}
	\label{eq:con7}
		\Bigg\lvert \sum\limits_{x\in\mcA_t\cap E\cap\mcC_t}(x-\mu) \Bigg\rvert\;\leq\; 2|\mcC_t|\sigma\sqrt{\log\frac{2}{\alpha}}\;\leq\; 2\sigma\varepsilon t\sqrt{\log\frac{2}{\varepsilon}}\ .
	\end{align}
	Now, let us define the event $\mcL_t$ as 
	\begin{align}
		\mcL_t\;\triangleq\;\Bigg\lbrace\bigg\lvert \sum\limits_{x\in\mcG_t\cap E}(x -\mu)\bigg\rvert\leq t\Bigg [ \mcO\Bigg(\sigma\alpha\sqrt{\log\frac{1}{\alpha}}\Bigg)+\sigma\sqrt{\frac{2}{t}\log\frac{1}{\delta}}\Bigg]\Bigg\rbrace\ .
	\end{align}
	From (\ref{eq:con}), we have
	\begin{align}
	    \P\Bigg\{ \bigg\lvert \sum\limits_{x\in\mcA_t} (x-\mu)\bigg\rvert \leq \bigg\lvert \sum\limits_{x\in\mcA_t\cap E\cap\mcG_t} (x-\mu)\bigg\rvert + \bigg\lvert \sum\limits_{x\in\mcA_t\cap E\cap C_t} (x-\mu)\bigg\rvert\Bigg\}\;\geq\; 1-\delta\ .
	\end{align}
	Thus,
	\begin{align}
	    1-\delta\;&\leq\;\P\Bigg\{ \bigg\lvert \sum\limits_{x\in\mcA_t} (x-\mu)\bigg\rvert \leq \bigg\lvert \sum\limits_{x\in\mcA_t\cap E\cap\mcG_t} (x-\mu)\bigg\rvert + \bigg\lvert \sum\limits_{x\in\mcA_t\cap E\cap C_t} (x-\mu)\bigg\rvert\;\Bigg\lvert \mcL_t\Bigg\}\times\P(\mcL_t)\nonumber\\ 
	    &+\quad\P\Bigg\{ \bigg\lvert \sum\limits_{x\in\mcA_t} (x-\mu)\bigg\rvert \leq \bigg\lvert \sum\limits_{x\in\mcA_t\cap E\cap\mcG_t} (x-\mu)\bigg\rvert + \bigg\lvert \sum\limits_{x\in\mcA_t\cap E\cap C_t} (x-\mu)\bigg\rvert\;\Bigg\lvert \bar\mcL_t\Bigg\}\times\P(\bar\mcL_t)\\
	    \label{eq:lemma_inter1}
	    &\leq\; \P\Bigg\{ \bigg\lvert \sum\limits_{x\in\mcA_t} (x-\mu)\bigg\rvert \leq tC_1\sigma\alpha\sqrt{\log\frac{1}{\alpha}} +  t\sigma\sqrt{\frac{2}{t}\log\frac{1}{\delta}} + 2\sigma\varepsilon t\sqrt{\log\frac{2}{\varepsilon}}\nonumber\\&\qquad\qquad + t\sigma\alpha\sqrt{2\log\frac{2}{\alpha}}\Bigg\} + \delta\ ,
	\end{align}
	where (\ref{eq:lemma_inter1}) is a result of (\ref{eq:con5}), (\ref{eq:con6}), and (\ref{eq:con7}). Further, let us set $\alpha=\varepsilon/2$. Thus, using this in conjunction with (\ref{eq:lemma_inter1}), there exists a constant $C_1\in\R_+$, such that for all $t>T(\alpha,\delta)$, we have
	\begin{align}
	\label{eq:lemma_iter2}
	    &\P\Bigg\{ \bigg\lvert \sum\limits_{x\in\mcA_t}(x-\mu)\bigg\rvert\leq tC_1\frac{\varepsilon}{2}\sigma\sqrt{\log\frac{2}{\varepsilon}} + 2\sigma\varepsilon t\sqrt{\log\frac{2}{\varepsilon}}+t\sigma\frac{\varepsilon}{2}\sqrt{2\log\frac{4}{\varepsilon}} + \sigma t\sqrt{\frac{2}{t}\log\frac{1}{\delta}} \Bigg\}\nonumber\\ &\qquad\qquad\qquad\geq\;1-2\delta\ .
	\end{align}
	Furthermore, dividing the term inside (\ref{eq:lemma_iter2}) throughout by $t(1-\varepsilon)$, we obtain that
	\begin{align}
	    1-2\delta\;&\leq\;\P\Bigg\{ \Big\lvert \hat\mu_t - \mu\Big\rvert\leq \frac{C_1\sigma\varepsilon}{2(1-\varepsilon)}\sqrt{\log\frac{2}{\varepsilon}}+\frac{\varepsilon\sigma}{2(1-\varepsilon)}\sqrt{2\log\frac{4}{\varepsilon}} + \frac{2\sigma\varepsilon}{1-\varepsilon}\sqrt{\log\frac{2}{\varepsilon}}\nonumber\\&\qquad\qquad + \frac{\sigma}{1-\varepsilon}\sqrt{\frac{2}{t}\log\frac{1}{\delta}}\Bigg\}\\
	    \label{eq:lemma_inter4}
	    &\leq\;\P\Bigg\{ \Big\lvert \hat\mu_t - \mu\Big\rvert\leq {C_1\sigma\varepsilon}\sqrt{\log\frac{2}{\varepsilon}}
	    +{\varepsilon\sigma}\sqrt{2\log\frac{4}{\varepsilon}}+ {4\sigma\varepsilon}\sqrt{\log\frac{2}{\varepsilon}} + \frac{\sigma}{1-\varepsilon}\sqrt{\frac{2}{t}\log\frac{1}{\delta}}\Bigg\} \\
	    \label{eq:lemma_inter3}
	    &\leq\;\P\Bigg\lbrace\Big\lvert \hat\mu_t - \mu\Big\rvert\leq \mcO\bigg(\varepsilon\sqrt{\log\frac{1}{\varepsilon}}\bigg) + \frac{\sigma}{1-\varepsilon}\sqrt{\frac{2}{t}\log\frac{1}{\delta}}\Bigg\rbrace\ ,
	\end{align}
	where (\ref{eq:lemma_inter4}) is a result of our assumption that $\varepsilon<1/2$.	Finally, replacing $\delta$ with $\delta/2$ in (\ref{eq:lemma_inter3}), we obtain the desired result.
	
% 	on the event $\mcL_t$, combining (\ref{eq:con5}), (\ref{eq:con6}), and (\ref{eq:con7}), and using any $\alpha<\varepsilon$, it can be readily verified that 
% 	\begin{align}
% 		\P\Bigg[\bigg\lvert \sum\limits_{x\in\mcA_t}(x-\mu)\bigg\rvert\leq t\bigg\lbrace\mcO\bigg(\varepsilon\sqrt{\log\frac{1}{\varepsilon}}\bigg) + \sigma\sqrt{\frac{2}{t}\log\frac{1}{\delta}}\bigg\rbrace\Bigg] \;>\;1-2\delta\ .
% 	\end{align}
% 	Finally, replacing $\delta$ with $\delta/2$ and dividing both sides of the event inside the above probability term by $t(1-2\alpha)$ with $\alpha=\varepsilon/2$, we obtain the desired result.
	
	\section{Proof of Theorem~\ref{theorem:PAC_gap}}
	\label{app:PAC_gap}
	Let us begin by defining the event
	\begin{align}
	\label{eq:PAC_E}
		\mcE\;\triangleq\; \bigg\{\forall t>T(\alpha,\delta),\;\forall i\in[K]\setminus a^\star\;:\; &\Big\lvert\hat\mu_i(t)-\mu_i\Big\rvert\leq U_i + \beta_i(t,\delta)\quad\text{and}\nonumber\\
		&\Big\lvert\hat\mu_{a^\star}(t)-\mu_{a^\star}\Big\rvert\leq U_{a^\star} + \beta_{a^\star}(t,\delta)\bigg\}\ ,
	\end{align}
	where we have defined 
	\begin{align}
		\beta_i(t,\delta)\;\triangleq\; \frac{\sigma}{1-\varepsilon}\sqrt{\frac{2}{N_i(t)}\log\frac{(K-1)Ct^\beta}{\delta}}\ .
	\end{align}
	Now, noting that we have defined the maximum overlap in confidence intervals between the best arm $a^\star$ and the most ambiguous arm $j_t$ as $B_t$ in~(\ref{eq:overlap}), for all $\tau>T(\alpha,\delta)$, we have that
	\begin{align}
		\P&\bigg\{\mu_{\hat a_{\tau}} + U_{\hat a_{\tau}}< \mu_{a^\star}-U_{a^\star}\bigg\}\nonumber\\
		&\leq\;\P\bigg\{(\mu_{a^\star}-U_{a^\star})-(\mu_{\hat a_{\tau}} + U_{\hat a_{\tau}})> B_{\tau} \bigg\}\\
		&\leq\;\P\bigg\{(\mu_{a^\star}-U_{a^\star})-(\mu_{\hat a_{\tau}} + U_{\hat a_{\tau}})> \hat\mu_{a^\star_{\tau}}(\tau) + \beta_{a^\star}(\tau,\delta) - \Big(\hat\mu_{\hat a_{\tau}}(\tau) - \beta_{\hat a_{\tau}}(\tau,\delta)\Big) \bigg\} \\
		&=\;\P\bigg\{(\mu_{a^\star}-\hat\mu_{a^\star_{\tau}}(\tau))-(\mu_{\hat a_{\tau}} -\hat\mu_{\hat a_{\tau}}(\tau))> \beta_{a^\star}(\tau,\delta) + \beta_{\hat a_{\tau}}(\tau,\delta) + U_{\hat a_{\tau}} + U_{a^\star} \;\med\;\mcE\bigg\}\P(\mcE)\nonumber\\
		&\quad+\;\P\bigg\{(\mu_{a^\star}-\hat\mu_{a^\star_{\tau}}(\tau))-(\mu_{\hat a_{\tau}} -\hat\mu_{\hat a_{\tau}}(\tau))> \beta_{a^\star}(\tau,\delta) + \beta_{\hat a_{\tau}}(\tau,\delta) + U_{\hat a_{\tau}} + U_{a^\star} \;\med\;\bar\mcE\bigg\}\P(\bar\mcE)\\
		&\leq\;\P(\bar\mcE)\ ,
	\end{align}
	where the first inequality is a result of the stopping criterion and the second inequality follows from the definition of the overlap $B_t$ in~(\ref{eq:overlap}).
	Hence, we have 
	\begin{align}
		&\P(\bar\mcE)\;=\;\P\bigg\{\exists t>T(\alpha,\delta),\;\exists a\in[K]\setminus a^\star,\; \Big\lvert\hat\mu_a(t)-\mu_a\Big\rvert> U_a + \beta_a(t,\delta)\quad\text{or}\nonumber\\ &\hspace{18em}\Big\lvert\hat\mu_{a^\star}(t)-\mu_{a^\star}\Big\rvert> U_{a^\star} + \beta_{a^\star}(t,\delta)\bigg\}\\
		&\leq\;\sum\limits_{a\in[K]\setminus a^\star}\sum\limits_{t=1}^\infty \P\bigg\{ \Big\lvert\hat\mu_a(t)-\mu_a\Big\rvert> U_a + \beta_a(t,\delta)\bigg\}+\P\bigg\{\Big\lvert\hat\mu_{a^\star}(t)-\mu_{a^\star}\Big\rvert> U_{a^\star} + \beta_{a^\star}(t,\delta)\bigg\}\\
		\label{eq:PAC0}
		&\leq\;\sum\limits_{a\in[K]\setminus a^\star}\sum\limits_{t=1}^\infty \frac{\delta}{(K-1)Ct^\beta}%\\
		%&\leq\;\delta\ ,
	\end{align}
	where (\ref{eq:PAC0}) is a result of Lemma \ref{theorem:est_con}. If we choose $C$ such that
	\begin{align}
		C\;\geq\; \sum\limits_{t=1}^\infty \frac{1}{t^\beta}\ ,
	\end{align}
	then we obtain that 
	\begin{align}
	    \P(\bar\mcE)\;\leq\;\delta\ .
	    \label{eq:PAC1}
	\end{align}
	Note that a choice of $C$ always exists for any $\beta>1$, since we have that
	\begin{align}
		\sum\limits_{t=1}^\infty \frac{1}{t^\beta}\;\leq 1 + \displaystyle\int_{1}^{\infty}\frac{1}{t^\beta}\d t\;=\;1 + (\beta-1)^{-1}\ .
	\end{align}
	Furthermore, note that by the design of our stopping rule, $\tau$ is always greater than $T(\alpha,\delta)$. Thus, choosing $C=(1+(\beta-1)^{-1})$ ensures that (\ref{eq:PAC1}) holds. 	This completes the proof.
	
	\section{Proof of Theorem~\ref{theorem:SC_gap}}
	\label{app:Sc_gap}
	Define $T$ as the first time such that $\hat a_t=a^\star$ for every $t\geq T$. We have 
	\begin{align}
		\P(T> t)\;&=\; \sum\limits_{s=t}^\infty\P(\hat a_s \neq a^\star, \hat a_{u}=a^\star,\;\forall u>s)\\
		&\leq\; \sum\limits_{s=t}^\infty\P(\hat a_s\neq a^\star)\\
		&=\;\sum\limits_{s=t}^\infty\P(\hat\mu_{\hat a_s}(s)>\hat\mu_{a^\star}(s))\\
		&\leq\; \sum\limits_{s=t}^\infty\sum\limits_{i\in[K]\setminus a^\star} \P(\hat\mu_i(s)>\hat\mu_{a^\star}(s))\ .
		\label{eq:SC1}
	\end{align}
	Furthermore, by the concentration of the $\alpha$-trimmed mean estimator in Lemma~\ref{theorem:est_con}, for every $i\in[K]$ and for all $t>T(\alpha,\delta)$, we have 
	\begin{align}
	\label{eq:SC2}
		\P\Bigg\{\lvert \hat\mu_i(t)-\mu_i\rvert>U_i + \Delta_i/2\Bigg\}\;\leq\; \exp\Bigg\{-\frac{\Delta_i^2(1-\varepsilon)^2N_i(t)}{8\sigma^2}\Bigg\}\ ,
	\end{align} 
	where we have defined 
	\begin{align}
		\Delta_i \;\triangleq\; (\mu_{a^\star}-U_{a^\star}) - (\mu_i+U_i)\ .
	\end{align}
	Let us define the event 
	\begin{align}
		\mcH(t)\;\triangleq\; \Bigg\{\forall i\in[K] : \;\lvert \hat\mu_i(t)-\mu_i\rvert > U_i +\frac{\Delta_i}{2}\Bigg\} \ .
	\end{align}
	Thus, for all $t>T(\alpha,\delta)$, we have
	\begin{align}
		\P(\hat\mu_i(t)>\hat\mu_{a^\star}(t))\;&=\; \P\{\hat\mu_i(t)>\hat\mu_{a^\star}(t),\;\mcH(t)\}\nonumber\\
		&+\;\P\{\hat\mu_i(t)>\hat\mu_{a^\star}(t),\;\bar\mcH(t)\}\\
		&\leq\;\sum\limits_{i\in[K]}\exp\Bigg\{-\frac{\Delta_i^2(1-\varepsilon)^2N_i(t)}{8\sigma^2}\Bigg\}\ ,
		\label{eq:SC3}
	\end{align}
	where the inequality is a result of (\ref{eq:SC2}) followed by a union bound. Furthermore, by the sampling strategy of our algorithm, we have $N_i(t)>\sqrt{t}$ for every $i\in[K]$. Thus, combining (\ref{eq:SC1}) and (\ref{eq:SC3}), for all $t>T(\alpha,\delta)$, we have
	\begin{align}
	\label{eq:SC_T}
		\P(T>t)\;&\leq\;\sum\limits_{s=t}^\infty\sum\limits_{i\in[K]\setminus a^\star}\sum\limits_{i\in[K]} \exp\Bigg\{-\frac{\Delta_i^2(1-\varepsilon)^2}{8\sigma^2}\sqrt{t}\Bigg\}\\
		&\leq\; K^2\displaystyle\int\limits_{t-1}^\infty \exp\Big(-M\sqrt{s}\Big )\d s\\
		&=\; \frac{2K^2}{M^2}\Big ( M\sqrt{t-1}+1\Big )\exp\Big (-M\sqrt{t-1}\Big ) \ ,
	\end{align}
	where we have set
	\begin{align}
	\label{eq:def_D}
	    M\;\triangleq\; \frac{\Delta_{b^\star}^2(1-\varepsilon)^2}{8\sigma^2}\ .
	\end{align}
	Now, under the event that $\{T\leq t\}$ and the event $\mcE$ defined in (\ref{eq:PAC_E}), for all $t>T(\alpha,\delta)$ with probability at least $1-\delta$, we have
	\begin{align}
		B_t\;&=\; \hat\mu_{j_t}(t) + \beta_{j_t}(t,\delta) - (\hat\mu_{\hat a_t}(t) - \beta_{\hat a_t}(t,\delta))\\
		&\leq\; (\mu_{j_t} + U_{j_t}) - (\mu_{\hat a_t} - U_{\hat a_t}) + 2(\beta_{j_t}(t,\delta)+\beta_{\hat a_t}(t,\delta))\\
		&=\;-\Delta_{j_t} - ((\mu_{\hat a_t} - U_{\hat a_t})-(\mu_{a^\star}-U_{a^\star})) + 2(\beta_{j_t}(t,\delta)+\beta_{\hat a_t}(t,\delta))\\
		&\leq\; -\max(\Delta_{A_t},\Delta_{b^\star}) + 4\beta_{A_t}(t,\delta)\ ,
		\label{eq:SC5} 
	\end{align}
	where the first inequality is obtained due to the event $\mcE$ and the last inequality is a result of the fact that $T\leq t$ combined with the arm selection strategy. Furthermore, note that our sampling strategy and stopping rule ensure that $N_i(t)>T(\alpha,\delta)$ for every $i\in[K]$. Let $t_i\in\N$ denote the last time that arm $i\in[K]$ is pulled before stopping. Then, as a consequence of the stopping criterion, (\ref{eq:SC5}), and along with the choice of the confidence intervals
	\begin{align}
		\beta_i(t)\;\triangleq\;\frac{\sigma}{(1-\varepsilon)}\sqrt{\frac{2}{N_i(t)}\log\frac{(K-1)Ct^\beta}{\delta}}\ ,
	\end{align} 
	we obtain
	\begin{align}
		\P\Bigg\{N_i(t_i)\leq \log\frac{(K-1)C t_i^\beta}{\delta}\cdot\frac{32\sigma^2}{(1-\varepsilon)^2\max\{\Delta_{b^\star},\Delta_i\}^2}\Bigg\}\;>\;1-\delta\ ,
	\end{align}
	which yields
	\begin{align}
		\P\Bigg\{N_i(\tau)\leq \log\frac{(K-1)C\tau^\beta}{\delta}\cdot\frac{32\sigma^2}{(1-\varepsilon)^2\max\{\Delta_{b^\star},\Delta_i\}^2}+1\Bigg\}\;>\;1-\delta\ .
	\end{align}
	Now, taking the limit of $\delta\rightarrow 0$, if $N_i(\tau)\geq T$, 
	\begin{align}
	    \label{eq:SC6}
		\lim\limits_{\delta\rightarrow 0}\;&\P\Bigg\{N_i(\tau)\leq \log\frac{(K-1)C\tau^\beta}{\delta}\cdot\frac{32\sigma^2}{(1-\varepsilon)^2\max\{\Delta_{b^\star},\Delta_i\}^2}+1\Bigg\}\\
		\label{eq:SC7}
		&=\; \P\Bigg\{\lim\limits_{\delta\rightarrow 0}\; N_i(\tau)\leq \log\frac{(K-1)C\tau^\beta}{\delta}\cdot\frac{32\sigma^2}{(1-\varepsilon)^2\max\{\Delta_{b^\star},\Delta_i\}^2}+1\Bigg\}\\
		&=\;1\ ,
	\end{align}
	where the transition from (\ref{eq:SC6}) to (\ref{eq:SC7}) is a result of the monotone convergence theorem. 
% 	To see why the condition of the monotone convergence theorem is satisfied, let us define $\delta_n \triangleq \frac{\delta}{n}$, for any $\delta>0$ and $n\in\N$. Thus, as $n\rightarrow\infty$, $\delta_n\rightarrow 0$. Subsequently, define the event
% 	\begin{align}
% 	    \mcD_n \;\triangleq\; \Bigg\{N_i(\tau)\leq \log\frac{(K-1)C\tau^\beta}{\delta_n}\cdot\frac{32\sigma^2}{(1-\varepsilon)^2\max\{\Delta_{b^\star},\Delta_i\}^2}+1\Bigg\}\ .
% 	\end{align}
% 	It can be readily verified that $\mcD_n\subset\mcD_{n+1}$ for all $n\in\N$, and thus, the sequence $\{\mcD_n : n\in\N\}$ is monotone increasing, and $\lim\limits_{n\rightarrow\infty}\mcD_n = \cup_{n\geq 1} \mcD_n$. 
	Next, note that for any $i\in[K]$, we have
	\begin{align}
	\label{eq:SC4}
		N_i(\tau)\;=\; N_i(\tau)\mathds{1}_{\{N_i(\tau)<T\}} + N_i(\tau)\mathds{1}_{\{N_i(\tau)\geq T\}} \ .
	\end{align}
	Thus, in the limit of $\delta\rightarrow 0$, from (\ref{eq:SC4}),  we almost surely have
	\begin{align}
		N_i(\tau)\;\leq\; T + \log\frac{(K-1)C\tau^\beta}{\delta}\cdot\frac{32\sigma^2}{(1-\varepsilon)^2\max\{\Delta_{b^\star},\Delta_i\}^2}+1\ .
	\end{align}
	Furthermore, from the fact that $\tau = \sum_{i\in[K]}N_i(\tau)$, in the limit of $\delta\rightarrow 0$ we almost surely have
	\begin{align}
		\tau\;\leq\; KT + \frac{16H}{(1-\varepsilon)^2}\log\frac{(K-1)C\tau^\beta}{\delta} + K \ .
	\end{align}
	Since $f(x)=x - \frac{1}{C_1}\log C_2 x^\alpha$ is a monotonically increasing function in $x$, there exists $x_{\max}$ such that for all $x\geq x_{\max}$, we have $f(x)\geq 0$. Next, we will find a choice $\bar x$ such that $f(\bar x)\geq 0$. This implies that $\bar x\geq x_{\max}$. To this end, we use~[\cite{pmlr-v49-garivier16a}, Lemma 18], which states that
	\begin{lemma}[\cite{pmlr-v49-garivier16a}, Lemma 18]
		\label{lemma:K16}
		For every $\alpha\in[1,e/2]$ and any two constants $C_1,C_2>0$ the identity
		\begin{align}
			x\;=\; \frac{\alpha}{C_1}\Bigg[\log\bigg(\frac{C_2 e}{C_1^\alpha}\bigg) + \log\log\bigg(\frac{C_2}{C_1^\alpha}\bigg)\Bigg]
		\end{align}
		indicates that $C_1x\geq \log(C_2 x^\alpha)$.
	\end{lemma}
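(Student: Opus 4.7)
The plan is to reduce the inequality to a one-parameter problem and then verify it by real analysis. Setting $y = C_1 x$ and $A = \log(C_2/C_1^\alpha)$, the claim $C_1 x \geq \log(C_2 x^\alpha)$ is equivalent to $y - \alpha \log y \geq A$, because $\alpha \log x = \alpha \log y - \alpha \log C_1$ and $\log C_2 - \alpha \log C_1 = A$. The prescribed value of $x$ corresponds exactly to $y = \alpha(A + 1 + \log A)$, where one implicitly requires $A > 0$ so that $\log A$ is defined.

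Substituting this $y$ into the reduced inequality and simplifying, the claim becomes
\[
g(A,\alpha) \;=\; (\alpha-1)\, A \;+\; \alpha \log\!\left(\frac{e\, A}{\alpha\,(A + 1 + \log A)}\right) \;\geq\; 0,
\]
for every $\alpha \in [1, e/2]$ and every admissible $A$. I would first dispose of the base case $\alpha = 1$: the linear term vanishes and the requirement reduces to $(e-1)A - \log A \geq 1$. Minimizing the left-hand side over $A > 0$ yields $A = 1/(e-1)$ with minimum value $1 + \log(e-1) > 1$, settling this case.

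For $\alpha > 1$, the new linear term $(\alpha-1)A$ contributes positively, while the logarithmic term may shrink because $\alpha$ appears in its denominator. The role of the ceiling $\alpha \leq e/2$ is precisely to keep this shrinkage within what the linear gain can absorb: for large $A$ the argument of the inner log tends to $e/\alpha \geq 2$, so the log term stays bounded below by a positive constant and the linear term dominates; for moderate $A$ I would either verify that $\partial g/\partial \alpha \geq 0$ throughout $[1, e/2]$ (which reduces the problem back to the $\alpha = 1$ case), or apply $\log(1+t) \leq t$ to the ratio $(A + 1 + \log A)/A = 1 + (1 + \log A)/A$ in order to bound the log-deficit by a quantity controlled by $(\alpha-1)A$.

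I expect the main obstacle to be uniform bookkeeping in $A$: the expression $A + 1 + \log A$ approaches zero near $A_0 \approx 0.278$, where the inner logarithm becomes large and positive (which in fact helps), but the transitional region where this inner log passes through zero requires careful estimates to ensure that no cancellation spoils the bound. Once monotonicity of $g$ in $\alpha$ on $[1,e/2]$ is established, the result collapses to the already-verified base case $\alpha = 1$.
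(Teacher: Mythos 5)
The paper does not actually prove this statement: it is imported verbatim as Lemma 18 of \cite{pmlr-v49-garivier16a} and used as a black box in the proof of Theorem 4.3, so there is no in-paper argument to compare against; your proposal must therefore stand on its own. Your reduction is correct and is the natural one: with $y=C_1x$ and $A=\log(C_2/C_1^\alpha)$ the claim is exactly $y-\alpha\log y\geq A$ for $y=\alpha(A+1+\log A)$, which is your $g(A,\alpha)\geq 0$, and your treatment of the base case $\alpha=1$ is complete. The gap is that for $\alpha\in(1,e/2]$ neither of the two routes you sketch goes through. First, $g$ is \emph{not} monotone in $\alpha$: writing $B=A+1+\log A$, one computes $\partial g/\partial\alpha=A-1+\log\bigl(eA/(\alpha B)\bigr)$, which at $A=0.8$, $\alpha=e/2$ equals approximately $-0.19$; so you cannot reduce the general case to $\alpha=1$. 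Second, the bound $\log(1+t)\leq t$ applied to $B/A=1+(1+\log A)/A$ gives $g\geq(\alpha-1)A+\alpha(1-\log\alpha)-\alpha(1+\log A)/A$, and at $A=1$, $\alpha=e/2$ this lower bound equals $(\alpha-1)-\alpha\log\alpha\approx-0.058$ even though $g(1,e/2)=e/2-1>0$; the estimate is too lossy precisely in the transitional region you flag as the main obstacle. As written, then, the proof covers only $\alpha=1$.

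The good news is that your reduction admits a short direct finish that uses each endpoint of $[1,e/2]$ exactly once and needs no case analysis in $A$ (beyond the implicit standing assumption $A+1+\log A>0$, without which $x\leq 0$ and the statement does not parse). Since $\alpha\geq 1$ and $A>0$,
\begin{align*}
y-A\;=\;(\alpha-1)A+\alpha(1+\log A)\;\geq\;\alpha\log(eA)\ ,
\end{align*}
so it suffices to show $\log(eA)\geq\log y$, i.e., $eA\geq\alpha(A+1+\log A)$. Since $\alpha\leq e/2$, this follows from $2A\geq A+1+\log A$, i.e., $A-\log A\geq 1$, which holds for every $A>0$ with equality at $A=1$. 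This two-line argument (presumably the one in \cite{pmlr-v49-garivier16a}) replaces the calculus in $\alpha$ that your proposal leaves open and that, in the monotonicity form, is actually false.
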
 
	In the above lemma, by choosing $C_1=\frac{(1-\varepsilon)^2}{16H}$ and $C_2=\frac{(K-1)C\exp(K(T+1)(1-\varepsilon)^2/16H)}{\delta}$, in the limit of $\delta\rightarrow 0$, we almost surely have
	\begin{align}
		\tau\;\leq\; &\frac{16\beta H}{(1-\varepsilon)^2}\Bigg[\log\frac{(K-1)Ce(16H/(1-\varepsilon)^2)^\beta}{\delta} + \log\log\frac{(K-1)C(16H/(1-\varepsilon)^2)^\beta}{\delta}\nonumber\\ &\qquad+ (K+TK) + \log (K+TK)\Bigg]\ . 
	\end{align}
	Thus, taking expectation on both sides of the above inequality, we have 
	\begin{align}
		\lim\limits_{\delta\rightarrow 0}\;\frac{\E[\tau]}{\log(1/\delta)}\;&\leq\lim\limits_{\delta\rightarrow 0}\; \frac{16\beta H\log\frac{(K-1)Ce(16H/(1-\varepsilon)^2)^\beta}{\delta}}{(1-\varepsilon)^2\log(1/\delta)} + \frac{16\beta H\log\log\frac{(K-1)C(16H/(1-\varepsilon)^2)^\beta}{\delta}}{(1-\varepsilon)^2\log(1/\delta)}\nonumber\\
		&+\;\lim\limits_{\delta\rightarrow 0}\; \frac{K+K\E[T]}{\log(1/\delta)} + \frac{\E[\log(K+TK)]}{\log (1/\delta)}\ .
		\label{eq:SC8}
	\end{align}
	Next, by recalling the definition of $M$ in~(\ref{eq:def_D}), note that 
	\begin{align}
		\E[T]\;&=\;\sum\limits_{t=1}^\infty \P(T\geq t)\\
		&\leq\;\frac{2K^2}{M^2}\Bigg\{1+\lim\limits_{x\rightarrow\infty}\;\displaystyle\int_{0}^{x} (M\sqrt{t}+1)\exp(-M\sqrt{t}) \d t\Bigg\}\\
		&\leq\;\frac{2K^2}{M^2}\Bigg(1+\frac{6}{M^2}\Bigg)\\
		&<\;+\infty\ ,
		\label{eq:SC9}
	\end{align}
	where the first inequality follows from (\ref{eq:SC_T}).	Thus, combing (\ref{eq:SC8}) and (\ref{eq:SC9}), we obtain 
	\begin{align}
		\lim\limits_{\delta\rightarrow 0}\;\frac{\E[\tau]}{\log(1/\delta)}\;\leq \frac{16\beta H}{(1-\varepsilon)^2}\ .
	\end{align}
	Finally, using the fact that $\varepsilon<1/2$, we obtain the desired result. Furthermore, it can be readily verified that the first part of the maximum operation in Theorem \ref{theorem:SC_gap} is a direct consequence of the stopping rule by choosing $\alpha=\varepsilon/2<1/4$. This completes the proof.
	
	\section{Proof of Theorem~\ref{theorem:PAC_SE}}
	\label{app:PAC_SE}
	Based on the estimator concentration in Lemma \ref{theorem:est_con}, for every $t>T(\alpha,\delta)$ and for any arm $i\in[K]$, we have
	\begin{align}
	\label{eq:SCE1}
	\P\Bigg\{\Big\lvert \hat\mu_i(t)-\mu\Big\rvert>U_i + \frac{\sigma}{(1-\varepsilon)}\sqrt{\frac{2}{t}\log\frac{Kt^2\pi^2}{12\delta}}\Bigg\}\;\leq\;\frac{6\delta}{Kt^2\pi^2}\ .
	\end{align}
	Earlier, we had defined $\mcM_t$ as the set of active arms, which were not yet been eliminated by the SE-CBAI algorithms at time $t$ (Algorithm~\ref{algorithm:2}, line 4). Based on that, let us define the event $\mcE$ such that
	\begin{align}
	\label{eq:PAC_SE_E}
	\mcF\;\triangleq\; \Big\{\big\lvert \hat\mu_i(t)-\mu_i\big\rvert\leq U_i + \gamma_t,\;\forall t\geq 1, \;\forall i\in \mcM_t\Big\}\ .
	\end{align} 
	We obtain
	\begin{align}
	\P(\bar\mcF)\;&\leq\; \sum\limits_{i\in[K]}\sum\limits_{t=1}^\infty \P\Big\{\big\lvert \hat\mu_i(t)-\mu_i\big\rvert\leq U_i + \gamma_t\Big\}\\
	&\leq\; \sum\limits_{i\in[K]}\sum\limits_{t=1}^\infty\frac{6\delta}{Kt^2\pi^2}\\
	&\leq\;\delta \ ,
	\end{align}
	where the second inequality is a consequence of (\ref{eq:SCE1}) and the last inequality holds due to the Basel identity. Event $\mcF$ implies that with probability at least $1-\delta$, for all $t$ and for every $j\in\mcM_t$, we have
	\begin{align}
	\hat\mu_{a^\star}(t)\;&\geq\;\mu_{a^\star} - U_{a^\star} - \gamma_t\\
	&=\; \Delta_j + (\mu_j + U_j) -\gamma_t\\
	&\geq\; \mu_j+U_j-\gamma_t\\
	&\geq\; \hat\mu_j(t) - 2\gamma_t \ .
	\end{align}
	This proves that the best arm $a^\star$ is contained in $\mcM_t$ with probability at least $1-\delta$ for every $t>T(\alpha,\delta)$. Finally, by the choice of our stopping rule $\tau$, we have $\tau>T(\alpha,\delta)$. This completes the proof.
	
	\section{Proof of Theorem~\ref{theorem:SC_SE}}
	\label{app:SC_SE}
	
	First, note that due to the successive elimination strategy, we have
	\begin{align}
	\label{eq:SC_SE1}
		\tau\;\leq\;2\sum\limits_{i\in[K]\setminus a^\star} N_i(\tau)\ .
	\end{align}
	Furthermore, by the choice of the active set $\mcM_t$ defined in Algorithm~\ref{algorithm:2} line 4, any arm $i\in[K]\setminus a^\star$ is eliminated no later than the time $t$ such that
	\begin{align}
	\label{eqn:SC_SE1}
		\hat\mu_i(t)\;<\;\hat\mu_{a^\star}(t) - 2\gamma_t \ .
	\end{align}
	Combining (\ref{eq:SC_SE1}) with the event $\mcF$ defined in (\ref{eq:PAC_SE_E}), with probability at least $1-\delta$ for all $t>T(\alpha,\delta)$, we have
	\begin{align}
		\hat\mu_i(t)\;<\; \mu_{a^\star} - U_{a^\star} -3\gamma_t\ ,\quad\forall\;i\in[K]\setminus a^\star\ .
	\end{align}
	This indicates that for $t>T(\alpha,\delta)$ with probability at least $1-\delta$, we have
	\begin{align}
		\mu_i+U_i+ \gamma_t\;<\;\mu_{a^\star}-U_{a^\star} -3\gamma_t\ ,\quad\forall\;i\in[K]\setminus a^\star\ ,
	\end{align}
	which, in turn, indicates that
	\begin{align}
		\P\Big (\Delta_i \;> \; 4\gamma_t\Big )\geq1-\delta\ ,\quad\forall\;i\in[K]\setminus a^\star\ .
	\end{align}
	The above inequality holds with equality by setting $\gamma_t$ as 
	\begin{align}
		\gamma_t\;\triangleq\; \frac{\sigma}{(1-\varepsilon)}\sqrt{\frac{2}{t}\log\frac{Kt^2\pi^2}{12\delta}}\ .
	\end{align}
	Hence, for some universal constant $L>0$, we have
	\begin{align}
	\label{eq:SC_SE2}
		N_i(\tau)\;\leq\;\frac{L\sigma^2}{\Delta_i^2(1-\varepsilon)^2}\log\frac{K}{\delta\Delta_i}\ ,\quad\forall\;i\in[K]\setminus a^\star\ . 
	\end{align}
	Finally, combining (\ref{eq:SC_SE1}) and (\ref{eq:SC_SE2}), in conjunction with the fact that from the sampling rule we know $N_i(\tau)>T(\alpha,\delta)$, we find that with probability at least $1-\delta$, we have
	\begin{align}
		\tau\;\leq\;\max\Bigg\{32K\log\frac{1}{\delta},\;\mcO\Bigg(\sum\limits_{i\in[K]\setminus a^\star}\frac{1}{\Delta_i^2}\log\frac{K}{\delta\Delta_i}\Bigg)\Bigg\}\ ,
	\end{align} 
	where we have used $\varepsilon<1/2$. This completes the proof.
	
	\section{Experimental Details}
	\label{app:sim}
	
	\textbf{Experiments with real data.} In this section, we provide the details of the experiments with real data. Specifically, we use two real-world datasets, one of which considers the application of content recommendation, and the other considers the applications of drug discovery. For content recommendation, we use {the New Yorker Caption Contest} dataset, and for drug discovery, we use the {PKIS2} datset. Each experiment is averaged over $1000$ Monte Carlo trials. For each experiment, the adversarial distribution is assumed to have a uniform distribution with a randomly generated mean such that the index of the best arm does not change as a result of corruption. 
	
	\textit{New Yorker Caption Contest:} This repository contains data gathered from the cartoon caption contest, in which users are asked to write captions for a given cartoon. The dataset is constructed using several cartoons (along with the captions) and the user ratings corresponding to each of them, where the users were asked to rate each caption as ``funny'' (3), ``somewhat funny'' (2) and ``unfunny'' (1). We choose contest 651 for our simulation, while several other contests are available in the repository, which can be found \href{https://github.com/nextml/caption-contest-data}{here}. For simplicity, we select $K=4$ captions from the contest with the aim of finding the caption which is the most highly rated. For this, we compute the empirical mean score for each caption, and then rewards are generated according to a Gaussian distribution with the corresponding empirical means. 
	
	\textit{Protein Kinase Inhibitors for Cancer Drug Discovery:} For this experiment, we use the {PKIS2} dataset, which is available in ~\cite{pkis2}, and it is an extended version of the {PKIS1} dataset published by Glaxo-SmithKline in 2013. The dataset contains a collection of protein kinase and a list of small molecule compounds (kinase inhibitors), and it enumerates how strongly each inhibitor reacts with each kinase. This is an important problem in cancer drug discovery, where researchers are interested in finding targeted kinase inhibitors for treating cancer cells. The dataset can be downloaded \href{ https://doi.org/10.1371/journal.pone.0181585.s004}{from this link}. For our experiment, we select one specific kinase ACVRL1, which is present in the dataset. {PKIS2} tests $641$ inhibitors against different kinase, out of which a total of $189$ are tested against ACVRL1. For simplicity, we select $K=4$ of these $189$ inhibitors. The dataset provides a ``percentage inhibition" for each compound, which is averaged over several trials. For each of these entries, we normalize it to be between $0$ and $1$, and then find out the percentage control by subtracting each of the normalized entries from $1$. The percentage control forms an interesting measure for understanding how effective the compound is against the targeted kinase. Furthermore, following the setup in ~\cite{mason2020finding}, we take the logarithm of each percentage control, which has been seen to have a Gaussian distribution whose variance is bounded by $1$. Finally, our goal is to find the compound that exhibits the highest percentage control against ACVRL1.
	
	\textbf{Experiments with synthetic data.} Next, we present two more experiments with synthetic data, which illustrate the looseness of the theoretical confidence interval for the proposed gap-based algorithm (Algorithm \ref{algorithm:1}). The adversarial model is the same as that of the real-world experiments. Specifically, for the first experiment (Figure~\ref{fig:Exp5} ), we use the confidence interval prescribed by theory (\ref{eq:conf_algo1}), which is observed to be loose empirically. For this experiment, we use the same set-up of a $4$-armed Gaussian bandit, with the mean vector $\bmu = [2.5,\;2.3,\; 2,\; 0.6]$, where the probability of attack is set to $\varepsilon=0.1$. Clearly, as discussed, in this setting, the median-based successive elimination procedure prescribed in~\cite{CBAI} outperforms all other methods due to the better uncertainty $U_i=\mcO\Big(\frac{\varepsilon}{1-\varepsilon}\Big)$. However, we observe that our proposed successive elimination algorithm based on the $\alpha$-trimmed mean estimator very closely follows the performance of the median-based algorithm. Furthermore, in the case of exponentially distributed bandit instances, the median-based procedure no longer works, since the exponential distribution is not unimodal. The second experiment (Figure~\ref{fig:Exp6} ) is based on this set-up, where we use an $8$-armed exponential bandit instance whose mean vector is given by $\bmu=[2.5,\; 2.3,\; 2,\; 1.4,\; 1,\; 0.6,\; 0.2,\; 0.05]$. In Figure~\ref{fig:Exp6}, the ``sample mean-based strategy'' refers to the successive elimination algorithm, where the estimator is replaced by the sample mean. All the other parameters remain the same as in the previous experiment, and we have averaged both the experiments over $1000$ Monte Carlo trials. In this case, we observe that the theoretical confidence interval for the gap-based procedure described in~(\ref{eq:conf_algo1}) is loose, and the proposed successive-elimination based algorithm outperforms the gap-based procedures in identifying the best arm.
	
	\begin{figure}[htb]
		\begin{subfigure}{.45\textwidth}
			\centering % centering figure
			%\scalebox{0.5} % rescale the figure by a factor of 0.8
			\includegraphics[width=2.1 in]{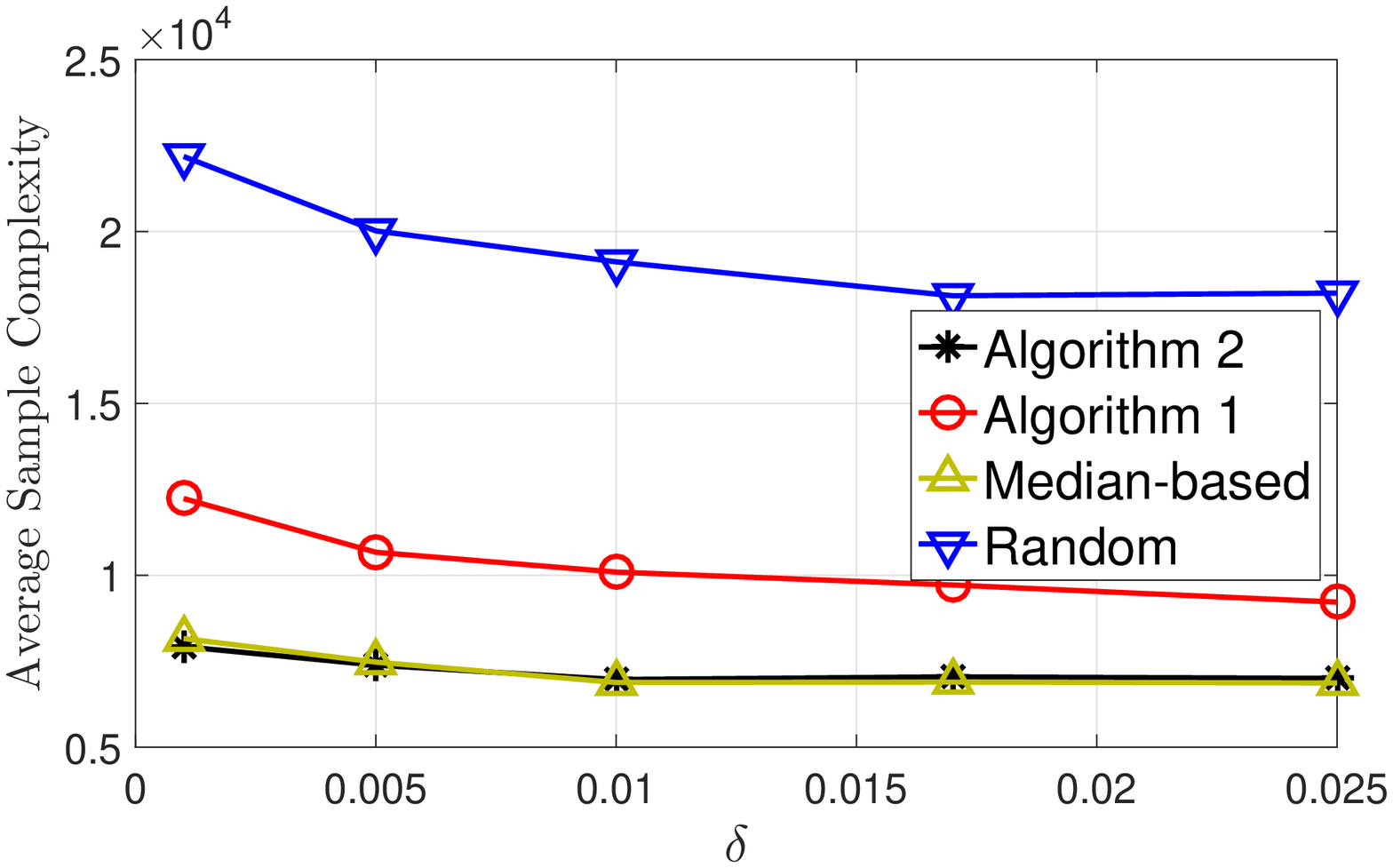} % importing figure
			\caption{Gaussian bandit}
			\label{fig:Exp5} % labeling to refer it inside the text
		\end{subfigure}
		\begin{subfigure}{.45\textwidth}
			\centering % centering figure
			%\scalebox{0.5} % rescale the figure by a factor of 0.8
			\includegraphics[width=2.1 in]{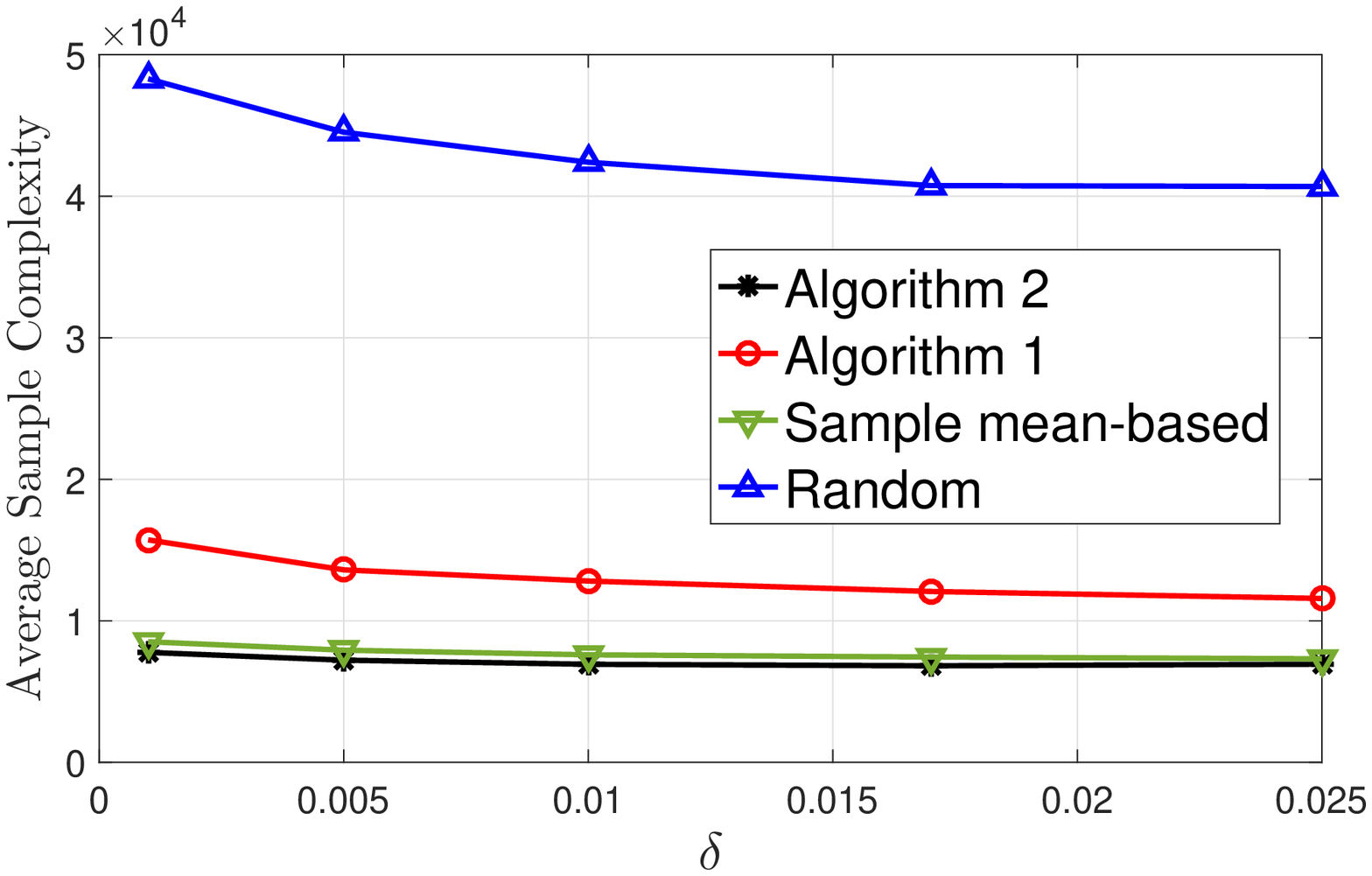} % importing figure
			\caption{Exponential bandit}
			\label{fig:Exp6} % labeling to refer it inside the text
		\end{subfigure}
		\caption{More experiments with synthetic data}
	\end{figure}
	{\textbf{Comparison with the sample median estimator.} To clarify our rationale of choosing the trimmed mean estimator over the sample median, we perform further experiments. Specifically, we have the following setup. We consider a simple bandit instance with $K=2$ arms, where the arms generate rewards drawn from a log-normal distribution (which is a heavy-tailed distribution). The parameters used for the two arms are $\mu=[1, 1.05]$ and $\sigma = [1, 1.2]$. The goal of the learner is to identify the arm with the highest mean, where the mean of any arm $i\in[K]$ is given by $\theta_i = \exp\Big(\mu + \frac{\sigma^2}{2}\Big)$. The superior performance of the trimmed mean estimator can be found in Figure \ref{fig:Exp7}. \\
	\textbf{Comparison with the sample mean estimator.} We also perform more experiments to show the robustness of the trimmed mean estimator over the sample-mean used in algorithms for the corruption-free setting. For this purpose, we use a corruption level of $\varepsilon=0.1$ to compare the performance of the algorithms \ref{algorithm:1} and \ref{algorithm:2} against the attack-free counterparts. For this experiment, we have used a Gaussian bandit with $K=8$ arms, where the mean vector is given by $\mu = [2.5, 2.3, 2, 1.4, 1, 0.6, 0.2, 0.05]$. The corresponding results for algorithms \ref{algorithm:1} and \ref{algorithm:2} can be found in Figures \ref{fig:Exp8} and \ref{fig:Exp9}, which clearly show the robustness of the trimmed mean estimator compared to the sample-mean.}
	%\vspace{-0.1in}
	\begin{figure}[htb]
		\begin{subfigure}{0.33\textwidth}
			\centering % centering figure
			%\scalebox{0.5} % rescale the figure by a factor of 0.8
			\includegraphics[width=\textwidth]{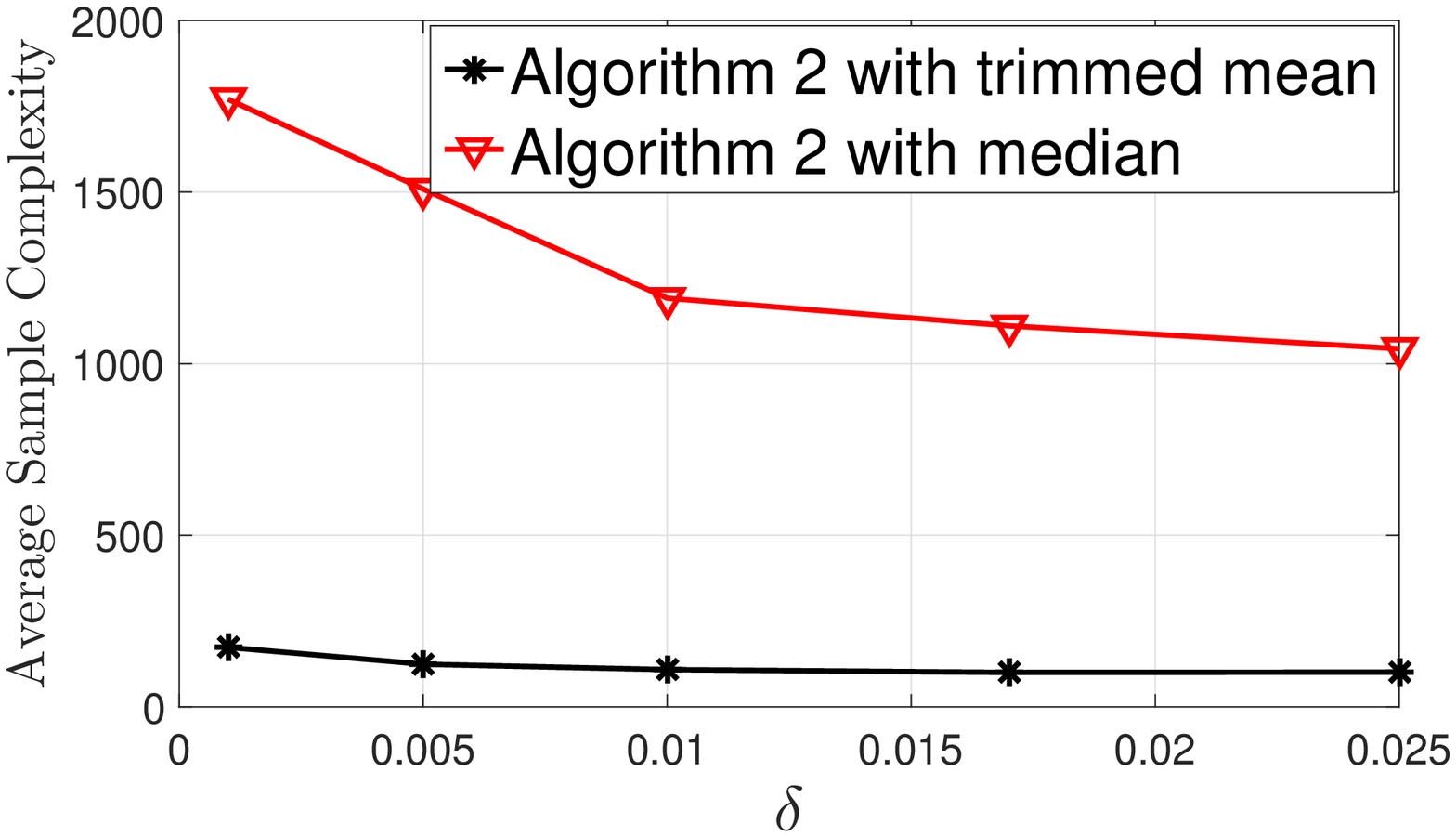} % importing figure
			\caption{Log-normal bandits}
			\label{fig:Exp7} % labeling to refer it inside the text
		\end{subfigure}
		\begin{subfigure}{0.33\textwidth}
			\centering % centering figure
			%\scalebox{0.5} % rescale the figure by a factor of 0.8
			\includegraphics[width=\textwidth]{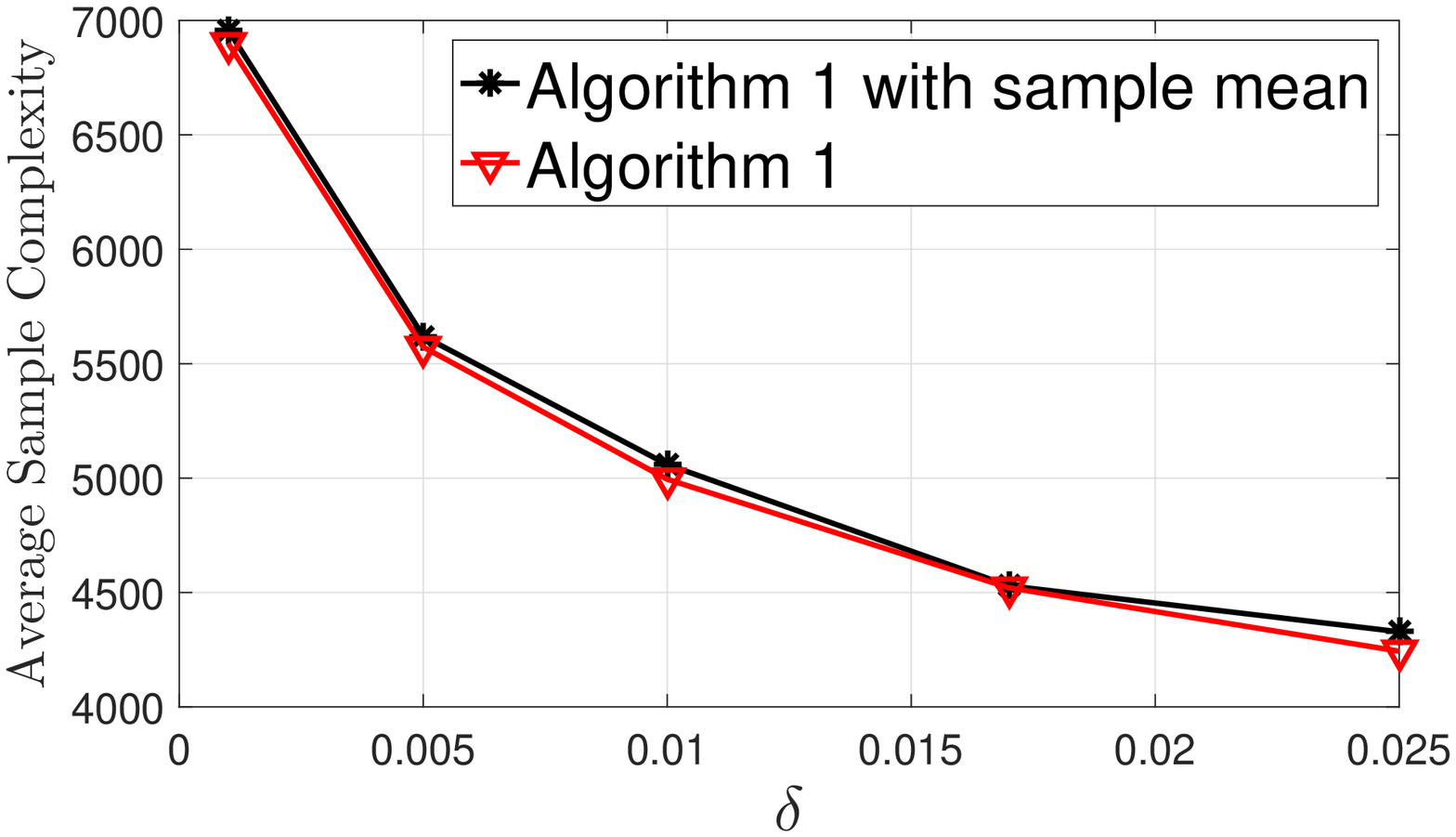} % importing figure
			\caption{Gaussian bandit}
			\label{fig:Exp8} % labeling to refer it inside the text
		\end{subfigure}
		\begin{subfigure}{.33\textwidth}
			\centering % centering figure
			%\scalebox{0.5} % rescale the figure by a factor of 0.8
			\includegraphics[width=\textwidth]{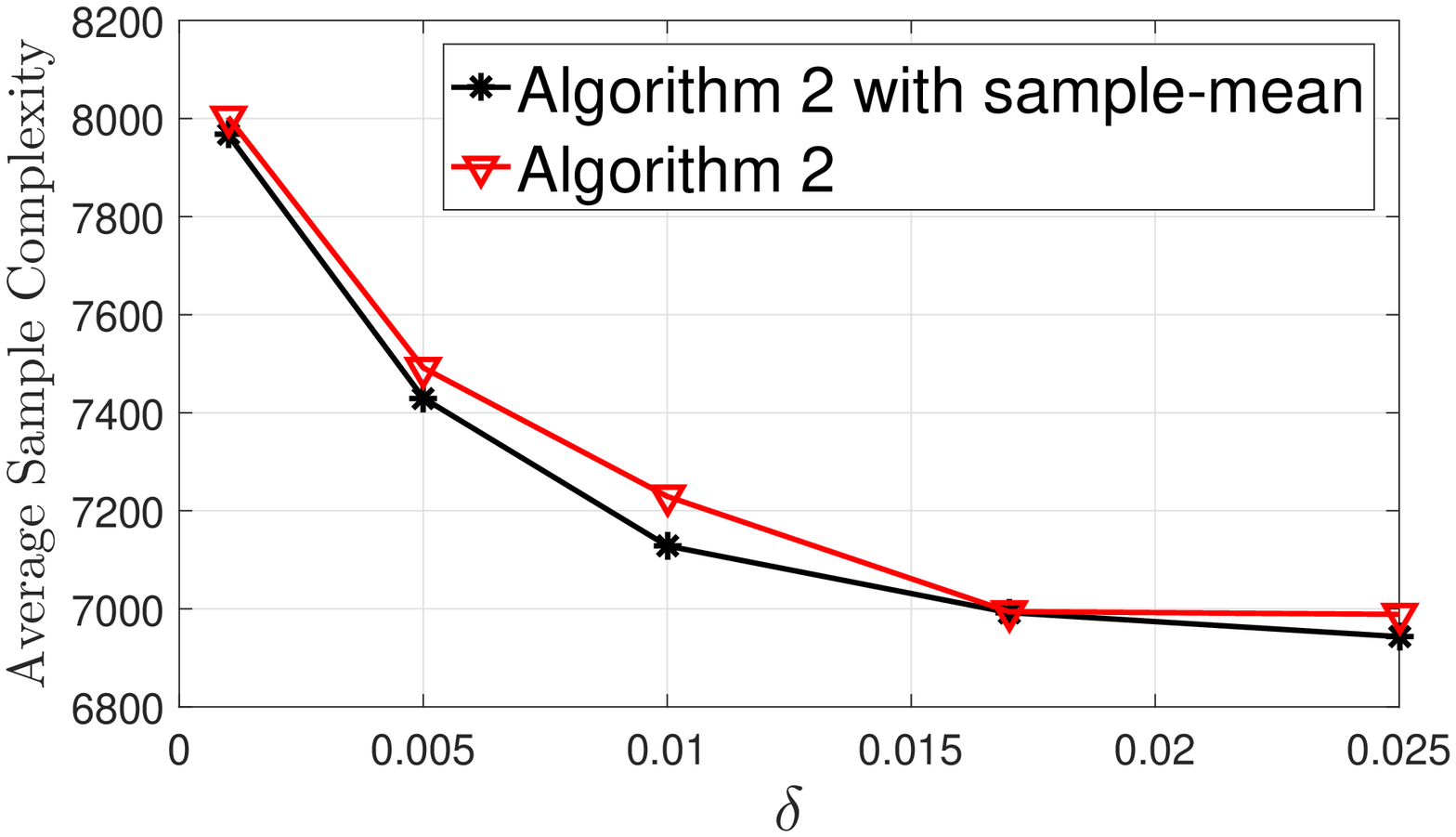} % importing figure
			\caption{Gaussian bandit}
			\label{fig:Exp9} % labeling to refer it inside the text
		\end{subfigure}
	\caption{Experiments for showing the efficacy of the trimmed mean estimator}
	\end{figure}
\end{document}